\documentclass{article}
\usepackage[final]{ProbNum25}

\usepackage[round]{natbib}

\bibliographystyle{abbrvnat}

\usepackage{amsmath, amssymb, bm, amsthm, xfrac}
\usepackage{xcolor}
\usepackage{hyperref}
\usepackage[capitalise,nameinlink]{cleveref}
\usepackage{acro,enumitem}
\usepackage{appendix}
\usepackage{arydshln}
\usepackage{colortbl}
\usepackage{tabularx}
\usepackage{graphicx}
\usepackage{subcaption}
\DeclareAcronym{rkhs}{short=RKHS, long=reproducing kernel Hilbert space}
\DeclareAcronym{GP}{short=GP, long=Gaussian process, long-plural=es }
\DeclareAcronym{SDE}{short=SDE, long=stochastic differential equation}
\DeclareAcronym{CG}{short=CG, long= conjugate gradient}
\DeclareAcronym{CompCG}{short=CompCG, long= companion \ac{CG}}
\DeclareAcronym{pCG}{short=pCG, long=preconditioned CG}

\newcommand\numberthis{\addtocounter{equation}{1}\tag{\theequation}}

\newcommand{\reals}{\mathbb{R}}

\newcommand{\set}[1]{\{ #1 \}}
\newcommand{\norm}[1]{\| #1 \|}

\DeclareMathOperator{\spanspace}{span}
\DeclareMathOperator{\matrank}{rank}
\DeclareMathOperator{\matnullity}{nullity}
\DeclareMathOperator{\nullspace}{null}

\algdef{SE}% flags used internally to indicate we're defining a new block statement
[STRUCT]% new block type, not to be confused with loops or if-statements
{Struct}% "\Struct{name}" will indicate the start of the struct declaration
{EndStruct}% "\EndStruct" ends the block indent
[1]% There is one argument, which is the name of the data structure
{\textbf{class} \textsc{#1}}% typesetting of the start of a struct
{\textbf{end class}}% typesetting the end of the struct

\newcommand{\grayline}{\arrayrulecolor{gray!50}\cline{1-12}\arrayrulecolor{black}}

\newtheorem{theorem}{Theorem}
\newtheorem{lemma}[theorem]{Lemma}

\newtheorem{definition}[theorem]{Definition}
\newtheorem{proposition}[theorem]{Proposition}
\setlength{\tabcolsep}{3.5pt}

\probnumtitle{Learning to Solve Related Linear Systems}

\probnumauthors{%
\name{Disha Hegde}%
\affiliation{University of Southampton, United Kingdom}%
\and%
\name{Jon Cockayne}%
\affiliation{University of Southampton, United Kingdom}%
}

\probnumabstract{Solving multiple parametrised related systems is an essential component of many numerical tasks, and learning from the already solved systems will make this process faster. In this work, we propose a novel probabilistic linear solver over the parameter space. This leverages information from the solved linear systems in a regression setting to provide an efficient posterior mean and covariance. We advocate using this as companion regression model for the preconditioned conjugate gradient method, and discuss the favourable properties of the posterior mean and covariance as the initial guess and preconditioner. We also provide several design choices for this companion solver. Numerical experiments showcase the benefits of using our novel solver in a hyperparameter optimisation problem.}

\begin{document}

\section{Introduction} \label{sec:introduction}

We consider the problem of solving multiple related linear systems.
Suppose that these linear systems vary according to some governing parameter $\theta \in \Theta$.
Then the set of problems we are interested in is of the form
\begin{equation} \label{eq:the_system}
A_\theta x_\theta^\star = b_\theta
\end{equation}
where $A_\theta \in \reals^{d \times d}$ is assumed to be symmetric positive definite for each $\theta \in \Theta$, $b_\theta \in \reals^{d}$ is a right-hand-side vector that is given, and the objective is to recover $x_\theta^\star \in \reals^d$.
We will suppose that there is some externally supplied sequence of parameters $(\theta_i)$, $i=1,2,\dots$ and index the systems according to $A_i \equiv A_{\theta_i}$ (similarly $x_i^\star$, $b_i$).
Our task is then, given some information about systems seen for $i=1,\dots,n-1$, to accelerate convergence of an iterative solver for the next system generated by $\theta_{n}$, in an online fashion.

In this work we argue that this problem can be structured as \emph{regression problem} and use the tools of Gaussian process regression to solve it.

\subsection{Related Work} \label{sec:related}

In this work we will focus on iterative linear solvers to solve these related linear systems.
Given the prevalence of related linear systems it is natural that there are many existing approaches that seek to solve this problem. 
These methods can roughly be split into the following three categories:

\paragraph{Initial Guess} Various attempts have been made to find improved initial guesses for the iterative solver in this setting. 
Warm restarting---using the solution of the previous linear system as the initial guess---is a commonly used approach. 
As an extension of this, extrapolating from previous solutions has also been explored, e.g., polynomial extrapolation in \citet{shterev_iterative_2015, austin_initial_2020}, using Taylor expansion in \citet{clemens_extrapolation_2003}, using weighted averages in \citet{ye_improving_2020}. 
Projection methods are also used \citep[e.g.][]{fischer_projection_1998, markovinovic_accelerating_2006} to estimate an efficient initial guess. 
Though these attempts accelerate the linear solver to some extent, using a suitable preconditioner can provide a far greater acceleration. 

\paragraph{Preconditioning}  One efficient way of increasing the efficiency of the linear solver is by preconditioning. 
A commonly used strategy to obtain a good preconditioner for subsequent problems is to start with a general preconditioner and adapt it in some way based on the solution of previous systems \citep[e.g.][]{bergamaschi_survey_2020, li_recycling_2015, carr_preconditioning_2021}. Machine learning based approaches like graph neural networks are commonly used to for this purpose \citep{hausner_neural_2024, li_learning_2023}.

Using covariances related to the problem as a preconditioner was also considered in \citet{calvetti2005priorconditioners}
where a preconditioner for an ill-posed inverse problem is constructed as the covariance matrix of sampled solutions, interpreted as a random variable.
The posterior covariance from a probabilistic linear solver with specific inverse prior has been used to accelerate generalized linear models in \citet{tatzel_accelerating_2023}.

\paragraph{Recycling Subspaces} Another way of accelerating (Krylov-based) linear solvers is to \textit{recycle} the Krylov subspaces explored for previous linear systems. The idea is to improve the search space for the current system by restricting it or enhancing it, leading to faster convergence. This is done by deflation \citep{saad_deflated_2000, de_roos_krylov_2017, daas_recycling_2021}, augmentation \citep{erhel_augmented_2000, carlberg_krylov-subspace_2016}, or a combination of both.

\subsection{Proposed Approach}

We propose to extend the framework of \emph{probabilistic linear solvers} \citep{hennig_probabilistic_2015,cockayne_bayesian_2019} to build a solver for related linear systems.
Existing probabilistic linear solvers (\cref{sec:probabilistic_linear_solvers}) typically function in a Bayesian framework by conditioning a prior distribution on observations of \cref{eq:the_system} (e.g.\ by left-multiplying by some matrix of \emph{search directions} $(S^m)^\top \in \reals^{m \times d}$).
We propose to extend this approach by extending the prior over the parameter space $\Theta$, so that, after conditioning, we can use the posterior to extrapolate to other problems (\cref{sec:methods}).
We refer to this idea as using the probabilistic solver as a \emph{companion regression model} for a classical iterative method.
Specifically, we will use the posterior predictive mean as an initial guess for subsequent systems, and the posterior predictive covariance as the preconditioner for an iterative linear solver.

Compared to the methods described in \cref{sec:related},
our approach is similar to using an extrapolation method to construct an initial guess, and one of the learning-based methods to construct a the preconditioner. 
There is also a peripheral relation to subspace recycling, as we will establish that, to ensure convergence, the initial guess and preconditioner must relate to each other in a particular subspace (see \cref{thm:pCG_spsd_pc}).

\subsection{Structure of the Paper}
The paper proceeds as follows. In \cref{sec:background} we discuss the background on iterative solvers (\cref{sec:iterative_solvers}), probabilistic  linear solvers  (\cref{sec:probabilistic_linear_solvers}) and vector-valued \acp{GP}  (\cref{sec:multi_output}). 
In \cref{sec:methods} we use these tools to extend probabilistic linear solvers across a parameter space, and discuss how to utilise the posterior to accelerate subsequent solves. 
We provide some theoretical results in \cref{sec:theory} and discuss efficient computation strategies in \cref{sec:computation}. 
In \cref{sec:applications}, we perform a simulation study (\cref{sec:sim_study}) to demonstrate the advantages of our method, followed by an application on a \ac{GP} hyperparameter optimisation problem in \cref{sec:optim}.
All the proofs are contained in \cref{sec:proofs}. Some practical implementation details and complexity analysis are discussed in \cref{sec:practical} and \cref{sec:complexity}. Finally, some additional numerical results are presented in \cref{sec:table}.

\section{Background} \label{sec:background}

In this section we will present the required background for the paper.
We will primarily focus on solution of single linear systems, and will suppress dependence of this system on the parameter $\theta$ to simplify the exposition.

\subsection{Iterative Solvers}\label{sec:iterative_solvers}
Iterative linear solvers construct a sequence of iterates $(x^j)$ which are such that $x^j \to x^\star$ as $j \to \infty$.
Well-known examples of such solvers are \emph{stationary iterative methods} such as Richardson's method; in such solvers the iterate $x^{j+1}$ is constructed by applying an affine map to the previous iterate $x^j$.
However such solvers are rarely used in contemporary applications, apart from occasionally to provide preconditioners (see e.g. \citet[Section 5.6]{chen_matrix_2005}); this is due to the fact that the reduction in error $x^j - x^\star$ is typically rather slow, and that convergence is only guaranteed in the limit as $j \to \infty$.

More sophisticated solvers such as Krylov methods \citep[see e.g.][Section 11.3 and Section 11.4]{golub_matrix_2013} are more widely used in practice as a result of their faster convergence.
This faster convergence is obtained because the map applied to $x^j$ to obtain $x^{j+1}$ is typically \emph{nonlinear} rather than affine.
As a result convergence is both faster and, for many methods, guaranteed after a finite number of iterations\footnote{These statements are true only in exact precision; in general convergence in finite precision is often slower}. In this work, we specifically focus on solving symmetric positive definite matrices, and thus restrict ourselves to the \ac{CG} method \citep[Section 11.3]{golub_matrix_2013}.

The convergence of these methods typically depends on two factors:
\begin{enumerate}[nosep]
	\item How far the initial iterate is from the true solution, i.e.\ the initial error $x^0 - x^\star$.
	\item Some measure of the problem's conditioning, often measured as a function of the condition number of the matrix $A_\theta$, or some other matrix related to it.
\end{enumerate}

It is natural to seek to reduce these two quantities, to try and realise an acceleration of the problem.
Improving the problem's conditioning is commonly achieved by building \emph{preconditioners} \citep[Section 11.5]{golub_matrix_2013}.
For a static problem (i.e.\ for a single fixed $\theta$) one is reliant on an understanding of the problem's structure or on generic methods to either improve the initial guess or construct preconditioners.
However, in the present setting where we have access to multiple related linear systems, one can try \emph{learn} good initial guesses and/or good preconditioners based on observations of \cref{eq:the_system} for different values of $\theta$.
In this work we will accomplish using a statistical approach based on \ac{GP} regression \citep{rasmussen_gaussian_2005}.

\subsection{Probabilistic Linear Solvers} \label{sec:probabilistic_linear_solvers}

Recent work has recast some iterative methods as probabilistic numerical methods through probabilistic linear solvers. 
Here we focus on the solution-based Bayesian approach to probabilistic linear solvers as described in \cite{cockayne_bayesian_2019}.
This can be viewed as an iterative method in which the iterates are Bayesian posterior distributions $(\mu^j)$, $j=1,\dots,d$ over $\reals^d$; specifically, these are some prior distribution $\mu_0$ conditioned sequentially on observations of the form 
\begin{equation}\label{eq:bayescg_info}
	(s^j)^\top A x = (s^j)^\top b .
\end{equation}
The vectors $s^j \in \reals^d$ are generally called \emph{search directions}.
These are assumed to be linearly independent and are often also constructed sequentially, by examining the distribution $\mu^{j-1}$ to build $s^j$.
We also introduce the search direction matrices $S^j = [s^1,\dots,s^j]\in \reals^{d \times j}$.

For reasons of conjugacy $\mu^0$ is taken to be Gaussian; $\mu^0 \sim \mathcal{N}(x^0, \Sigma^0)$. 
Using the closed-form expression for a Gaussian posterior under noise-free linear observations such as in \cref{eq:bayescg_info}, one can then show that for all $j=1,\dots,d$ we have the following expressions for the iterates $\mu^j$:
\begin{subequations}
\begin{align}
	\mu^j &= \mathcal{N}(x^j, \Sigma^j) \\
	x^j &= x^0 + \Sigma^0 A^\top S^j (\Lambda^j)^{-1} (S^j)^\top (b - A x_0) \\
	\Sigma^j &= \Sigma^0 - \Sigma^0 A^\top S^j (\Lambda^j)^{-1} (S^j)^\top A \Sigma^0 \\
	\Lambda^j &= (S^j)^\top A \Sigma^0 A^\top S^j.
\end{align} \label{eq:bayescg:posterior}
\end{subequations}

\paragraph{BayesCG}
A prominent choice of search directions in this framework results in the Bayesian conjugate gradient method (BayesCG; \citet{cockayne_bayesian_2019}).
Here the search directions are constructed sequentially via the Lanczos process \citep[Section 10.3]{golub_matrix_2013}, with each successive direction being an orthonormalised version of the previous residual, precisely given by:
\begin{equation} \label{eq:bayescg:sd}
	\tilde{s}^j = r^{j-1} - (r^{j-1})^\top A \Sigma^0 A^\top s^{j-1} \cdot s^{j-1}
\end{equation}
where $\tilde{s}^1 = r^0$, $s^j = \tilde{s}^j / \norm{s^j}_{A\Sigma^0 A^\top}$, and $r^j = b - A x^j$.
One can show that the $s^j$ are orthonormal in the $A\Sigma^0 A^\top$ inner product \citep[Proposition 7]{cockayne_bayesian_2019}.

These directions obtain information about the unknown solution in directions where the residual has largest magnitude, focussing computation on the largest errors in the present estimate.
Two important mathematical consequences of this are that (i) one can show that the error $\norm{x^j - x^\star}$ decays at a geometric rate in $j$, and (ii) the orthogonality properties of the search directions are such that the matrix $\Lambda^j$ is diagonal, which eliminates an $\mathcal{O}(j^3)$ implementation cost.

\paragraph{Prior Choice}
While BayesCG is defined for generic prior covariance $\Sigma^0$, the choice $\Sigma^0 = A^{-1}$ has particular significance, as it results in iterates that coincide with the iterates of CG \citep[see][Proposition 4]{cockayne_bayesian_2019}.
While in general this is an impractical choice of prior, as calculation of the posterior covariance requires that $A^{-1}$ be realised, defeating the point of an iterative solution to the system, in certain applications (\cite{wenger_posterior_2022,Wenger2024CAGPModelSelection,tatzel_accelerating_2023,pfortner_computation-aware_2024,Hegde2025GaussSeidel}) $A^{-1}$ cancels in downstream calculations, making this a more practical choice.
A proxy for $A^{-1}$ considered in \citet{cockayne_bayesian_2019} is to use a preconditioner for $A$. Other works such as \citet{reid_bayescg_2022,Vyas2025Randomised} seek to use the $A^{-1}$ prior implicitly by performing a small number of ``post-iterations'' of CG to compute a low-rank approximation to the posterior.

\subsection{Vector-valued Gaussian processes} \label{sec:multi_output}

The central focus of this work is on extending probabilistic linear solvers, which thus far have only been applied to single linear systems, to the setting of multiple related linear systems. 
The main tool we will use to accomplish this are vector-valued \acp{GP}.
Essentially, we translate from a Gaussian distribution over the solution $x$ of a single linear system to a \ac{GP} over the solution $x_\theta$ as a function of $\theta$.
We first recap the scalar-valued case, before discussing vector-valued \acp{GP}.

\subsubsection{Scalar-Valued \ac{GP} regression} \label{sec:gp_regression_scalar}
\begin{definition}[Gaussian process]
	\citep[Definition 2.1]{alvarez_kernels_2012}
	A random function $x : \Theta \to \reals$ is said to be a \emph{Gaussian process} if, for any finite-dimensional subset $T = \set{\theta_1, \dots, \theta_{n_T}} \subset \Theta$, it holds that $x(T):=[x(\theta_1), \dots, x(\theta_{n_T})]$ follows a multivariate Gaussian distribution. 
\end{definition}

\acp{GP} are typically parameterised by their mean function $m(\theta) = \mathbb{E}(x(\theta))$ and their (positive-definite) covariance kernel $k(\theta, \theta') = \textup{Cov}(x(\theta), x(\theta'))$; we will use the notation $x\sim \mathcal{GP}(m, k)$. 
The primary use case of \acp{GP} is to perform regression. 
Specifically, supposing we are given data $\mathcal{D} := \set{(\theta_i, y_i)}_{i=1}^n$ where $y_i = x(\theta_i) + \zeta_i$, $\zeta_i \stackrel{\textup{IID}}{\sim} \mathcal{N}(0, \sigma^2)$, then the posterior distribution $x \mid \mathcal{D}$ possesses a closed-form given by
\begin{align*}
	x\mid \mathcal{D} &\sim \mathcal{N}(\bar{m}, \bar{k}) \\
	\bar{m}(\theta) &= m(\theta) + k(\theta, T) [k(T, T) + \sigma^2 I]^{-1}(y - m(T)) \\
	\bar{k}(\theta, \theta') &= k(\theta, \theta') - k(\theta, T) [k(T, T) + \sigma^2 I]^{-1}k(T, \theta')
\end{align*}
where $T = \set{\theta_1, \dots, \theta_n}$.

\subsubsection{Vector-Valued Case} \label{sec:gp_regression_vector}
The most straightforward way to extend the above to a vector-valued setting is to think of augmenting the domain of a scalar-valued process with an integer-valued index, representing the index of the (vector-valued) output function.
More formally we construct a Gaussian process on the domain $\Theta \times [1,d]$, where $[1,d]$ is the integers from $1$ to $d$, and abbreviate $x_i(\theta) \equiv x(\theta, i)$. 
We represent the mean as $m_i(\theta) = \mathbb{E}(x_i(\theta)) $ and the (positive definite) kernel as $k_{ij}(\theta, \theta') = \textup{Cov}(x_i(\theta), x_j(\theta'))$.
The marginal distribution for each $\theta$ is therefore multivariate normal, and given by
$$
x(\theta) = \mathcal{N}(m(\theta), C(\theta, \theta))
$$
where $m(\theta) \in \reals^d$ has $i$\textsuperscript{th} component $m_i(\theta)$ while $C(\theta, \theta) \in \reals^{d \times d}$ has $(i,j)$-component $k_{ij}(\theta, \theta)$, with positive definiteness assured by positive definiteness of the covariance kernel.

Under observations of the form $y_i = x(\theta_i) + \zeta_i$, \ac{GP} regression can then be performed by ``stacking'' the covariance matrices $C(\theta, \theta')$ in appropriate blocks, and applying a formula analogous to that given above.
A more explicit description of this can be found in many works, e.g.~\cite{alvarez_kernels_2012}.
In this work, however, as we aim to model the solution vectors $x_\theta$ over the parameter space, we require more general observations of the vector-valued function $x(\theta)$, as described in the next section.

\section{Learning to Solve Related Systems} \label{sec:methods}
We will now extend the methodology in \cref{sec:probabilistic_linear_solvers} to multiple related systems.
To simplify notation we will leave iteration numbers implicit, so that $S_i^j = S_i$ etc.
As described in \cref{sec:probabilistic_linear_solvers}, we will suppose we are given information by projecting the linear systems against a sequence of projection matrices $S_i \in \reals^{d \times m_i}$ with linearly independent columns.
i.e.\ we have
\begin{align} \label{eq:yi}
    S_i^\top A_i x_i^\star = S_i^\top b_i =: y_i. 
\end{align}
To use this information from previously solved systems $i = 1, 2, \dots n$ and the current $n$th system, we augment the dataset with the search direction matrices, i.e.~$\mathcal{D}_n := \{ (\theta_i, S_i, y_i) \}_{i=1}^n$.
The posterior distribution for this choice of data is described in the next section.

\subsection{Full Posterior} \label{sec:posterior_full}
We use a multi-output Gaussian process prior as described in \cref{sec:multi_output}.
Specifically we take $x_\theta \sim \mu_0$, where $\mu_0 = \mathcal{N}(\bar{x}_0, C_0)$, $\bar{x}_0 = \bar{x}_0(\theta)$ and $C_0 : \Theta \times \Theta \to \reals^{d \times d}$ is a positive-definite covariance function.
The posterior distribution is given in the next proposition.

\begin{proposition} \label{prop:posterior}
	We have that $x_\theta \mid \mathcal{D}_n \sim \mathcal{N}(\bar{x}_n, C_n)$, where
\begin{align*}
\bar{x}_n(\theta) &= x_0(\theta) + K_n(\theta) G_n^{-1} z_n \\
C_n = C_n(\theta, \theta) &= C_0(\theta, \theta) - K_n(\theta) G_n^{-1} K_n^\top(\theta)
\end{align*}
and the matrices $K_n(\theta), G_n$ are each block matrices with dimensions $d \times M_n$ and $M_n \times M_n$ respectively, where $M_n = \sum_{i=1}^n m_i$, while $z_n \in \reals^{M_n}$.
Specifically we have
\begin{align*}
    % K_n(\theta) &= \begin{pmatrix}
    %     C_0(\theta, \theta_1) A_1^\top S_1 & \cdots & C_0(\theta, \theta_n) A_n^\top S_n
    % \end{pmatrix} \\ 
    K_n(\theta) &= \left[C_0(\theta, \theta_j) A_j^\top S_j \right]_{1 \leq j \leq n} \\
    % G_n &= \begin{pmatrix}
    %     S_1^\top A_1 C_0(\theta_1, \theta_1) A_1^\top S_1 & \cdots & S_1^\top A_1 C_0(\theta_1, \theta_n) A_n^\top S_n \\
    %     \vdots & \ddots & \vdots \\
    %     S_n^\top A_n C_0(\theta_n, \theta_1) A_1^\top S_1 & \cdots & S_n^\top A_n C_0(\theta_n, \theta_n) A_n^\top S_n
    % \end{pmatrix} \\
    G_n &= \left[S_i^\top A_i C_0(\theta_i,\theta_j)A_j^\top S_j \right]_{1 \leq i,j \leq n } \\
    %     z_n &= \begin{pmatrix}
    %     y_1 - S_1^\top A_1 x_0(\theta_1)\\
    %     \vdots \\
    %     y_n - S_n^\top A_n x_0(\theta_n)
    % \end{pmatrix}
    z_n &= \left[y_i - S_i^\top A_i x_0(\theta_i) \right]_{1 \leq i \leq n}
\end{align*}
\end{proposition}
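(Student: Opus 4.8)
The plan is to view each observation $y_i = S_i^\top A_i x_{\theta_i}$ as a deterministic linear image of the prior process evaluated at $\theta_i$, so that the whole problem reduces to the textbook formula for conditioning a jointly Gaussian pair on one of its components. First I would establish joint Gaussianity: by the defining property of the vector-valued \ac{GP}, the finite collection $(x(\theta), x(\theta_1),\dots,x(\theta_n))$ is jointly Gaussian, with block means $\bar{x}_0(\cdot)$ and cross-covariances $\mathrm{Cov}(x(\theta_i),x(\theta_j)) = C_0(\theta_i,\theta_j)$. Since each $y_i = S_i^\top A_i x(\theta_i)$ is a fixed linear map applied to one of these blocks, the augmented vector $(x(\theta), y_1,\dots,y_n)$ is again jointly Gaussian, and stacking the $y_i$ into $y \in \reals^{M_n}$ produces a Gaussian pair $(x(\theta), y)$.

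I would then apply the conditioning identity: for jointly Gaussian $(U,V)$ one has $U \mid (V=v) \sim \mathcal{N}(\mu_U + \Sigma_{UV}\Sigma_{VV}^{-1}(v-\mu_V),\ \Sigma_{UU} - \Sigma_{UV}\Sigma_{VV}^{-1}\Sigma_{VU})$, taken here with $U = x(\theta)$ and $V = y$. Each ingredient follows from linearity of expectation and bilinearity of covariance. The observation mean has $i$-th block $\mathbb{E}[y_i] = S_i^\top A_i \bar{x}_0(\theta_i)$, so $v - \mu_V$ has $i$-th block $y_i - S_i^\top A_i \bar{x}_0(\theta_i)$, matching $z_n$. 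The cross-covariance has $j$-th block $\mathrm{Cov}(x(\theta), y_j) = C_0(\theta,\theta_j) A_j^\top S_j$, matching $K_n(\theta)$ (and $\Sigma_{VU} = K_n^\top(\theta)$ by symmetry), while the observation covariance has $(i,j)$-block $\mathrm{Cov}(y_i,y_j) = S_i^\top A_i C_0(\theta_i,\theta_j) A_j^\top S_j$, matching $G_n$; finally $\mu_U = \bar{x}_0(\theta)$ and $\Sigma_{UU} = C_0(\theta,\theta)$. Substituting gives exactly the stated $\bar{x}_n(\theta)$ and $C_n(\theta,\theta)$ (with $x_0 = \bar{x}_0$ denoting the prior mean).

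The computation is routine bilinear bookkeeping, and the block dimensions ($K_n(\theta)\in\reals^{d\times M_n}$, $G_n\in\reals^{M_n\times M_n}$, $z_n\in\reals^{M_n}$) follow at once from $\dim y_i = m_i$ and $M_n = \sum_i m_i$. The one step I expect to require genuine care is the well-posedness hidden in $G_n^{-1}$: I would show that $G_n$, being the covariance of $y$, is positive definite. Writing a generic quadratic form as $\mathrm{Var}(\sum_i c_i^\top y_i) = \sum_{i,j} (A_i S_i c_i)^\top C_0(\theta_i,\theta_j)(A_j S_j c_j)$, positive-definiteness of $C_0$ across distinct inputs forces each $A_i S_i c_i = 0$, whence $c_i = 0$ because $A_i$ is invertible (symmetric positive definite) and $S_i$ has linearly independent columns. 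This both justifies the inverse and exposes the non-degeneracy assumption implicitly needed (distinct $\theta_i$); should it fail, the same derivation holds verbatim with $G_n^{-1}$ replaced by the Moore--Penrose pseudoinverse.
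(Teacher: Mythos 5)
Your proof is correct and matches the paper's treatment: the paper presents this proposition as an instance of standard Gaussian conditioning under noise-free linear observations of a (vector-valued) \ac{GP}, which is exactly the joint-Gaussianity-plus-conditioning-identity route you take, and it defers the only nontrivial point---invertibility of $G_n$---to \cref{lemma:invertible}. Your quadratic-form argument for positive definiteness of $G_n$ (reducing to $A_i^\top S_i c_i = 0$, then using invertibility of $A_i$ and full column rank of $S_i$, with the implicit requirement that the $\theta_i$ be distinct so that $C_0(T,T)$ is positive definite) is equivalent in substance to that lemma's proof, which factorises $G_n = (\mathcal{A}^\top_n \mathcal{S}_n)^\top C_0(T,T)\, \mathcal{A}^\top_n \mathcal{S}_n$ and counts ranks.
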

Finally, the cross covariance $\textup{Cov}(x_\theta, x_\theta') = C_n(\theta,\theta')$, where
\begin{align*}
    C_n(\theta,\theta') = C_0(\theta, \theta') - K_n(\theta) G_n^{-1} K_n^\top(\theta').
\end{align*}

The next result guarantees invertibility of $G_n$ provided the $S_i$ have linearly independent columns.

\begin{lemma} \label{lemma:invertible}
	Assume $A_i$ is invertible and $S_i$ has linearly independent columns for $i = 1,\dots, n$. Then $G_n$ is invertible.
\end{lemma}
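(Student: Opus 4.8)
The plan is to recognise $G_n$ as a congruence transform of the block Gram matrix of the kernel $C_0$, and then show that both factors involved are nondegenerate. Writing $B_i = A_i^\top S_i \in \reals^{d \times m_i}$ and collecting these into the block-diagonal matrix $B = \operatorname{diag}(B_1, \dots, B_n) \in \reals^{nd \times M_n}$, together with the full block Gram matrix $\mathcal{C} = [C_0(\theta_i, \theta_j)]_{1 \le i,j \le n} \in \reals^{nd \times nd}$, one reads off directly from the stated block entries that $G_n = B^\top \mathcal{C} B$. Since $G_n$ is symmetric and positive semi-definite by this identity, invertibility is equivalent to positive definiteness, so it suffices to show $v^\top G_n v > 0$ for every $v \neq 0$.

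For this I would argue at the level of the quadratic form. Fix $v = (v_1, \dots, v_n) \neq 0$ with $v_i \in \reals^{m_i}$ and set $w_i = B_i v_i \in \reals^d$, so that $v^\top G_n v = \sum_{i,j} w_i^\top C_0(\theta_i, \theta_j) w_j$. The first step is to show $w = (w_1, \dots, w_n)$ is nonzero: because $v \neq 0$, some $v_i \neq 0$, and since $A_i$ is invertible (hence so is $A_i^\top$) while $S_i$ has linearly independent columns, the block $B_i = A_i^\top S_i$ has full column rank, forcing $w_i = B_i v_i \neq 0$; equivalently $B$ has full column rank. The second step invokes positive-definiteness of $C_0$: for nonzero $w$ the Gram form $\sum_{i,j} w_i^\top C_0(\theta_i, \theta_j) w_j$ is strictly positive, i.e.\ $\mathcal{C} \succ 0$. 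Combining the two yields $v^\top G_n v = w^\top \mathcal{C} w > 0$, so $G_n$ is positive definite and therefore invertible.

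The linear-algebra core---that congruence by a full-column-rank matrix preserves positive definiteness---is routine, so the only delicate point is the appeal to positive-definiteness of $C_0$. This is where the hypotheses must be read carefully: the block Gram matrix $\mathcal{C}$ is genuinely positive definite only when the parameters $\theta_1, \dots, \theta_n$ are distinct, since otherwise repeated blocks make $\mathcal{C}$ singular and $G_n$ can indeed degenerate once the associated search directions coincide. In the related-systems setting each system is generated by a distinct $\theta_i$, so I would either state this distinctness explicitly or absorb it into the meaning of ``positive-definite covariance function.'' Beyond this caveat, the remaining work is the purely mechanical verification of the identity $G_n = B^\top \mathcal{C} B$ against the given definitions of $K_n$ and $G_n$, which is immediate from their block entries.
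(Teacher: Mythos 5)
Your proof is correct and is essentially the paper's own argument: both exploit the same congruence factorisation $G_n = B^\top \mathcal{C} B$ with $B = \mathrm{diag}(A_1^\top S_1,\dots,A_n^\top S_n)$ of full column rank (from invertibility of the $A_i$ and full column rank of the $S_i$) and positive definiteness of the block Gram matrix $\mathcal{C} = C_0(T,T)$, the only difference being that the paper cites Theorem 4.2.1 of Golub and Van Loan where you verify the quadratic form directly. Your caveat that the $\theta_i$ must be distinct for $C_0(T,T)$ to be strictly positive definite is a legitimate observation that the paper leaves implicit in its standing assumption that $C_0$ is a positive-definite covariance function.
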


Although we focus on symmetric positive definite matrices in this work, the posterior  given in \cref{prop:posterior} holds more generally and, in particular, is well defined if $A_i$ is nonsingular for $i=1,\dots,n$.
It could therefore be treated as a probabilistic linear solver in its own right, though this is not the focus of this paper.

Having defined the posterior distribution we now turn to how it can be used in practice.
As mentioned in \cref{sec:introduction}, we propose to use the posterior as a companion regression model, and use it to construct both initial guesses and preconditioners that can be passed to a classical iterative linear solver.
In the next section we will present some theoretical results that justify this approach.

\section{Theoretical Results} \label{sec:theory}
We now present some theoretical properties of the posterior covariance of the companion model. 
Our first result shows that the posterior predictive covariance has full rank (and is thus positive definite), when evaluated at locations not contained in the training set.
\begin{theorem} \label{thm:rank(C_n-1)}
	Assume the covariance $C_0$ is positive-definite.
    Then $C_{n-1}(\theta_n,\theta_n)$ is full rank for all $\theta \not\in T$. 
\end{theorem}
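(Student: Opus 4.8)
The plan is to recognise $C_{n-1}(\theta,\theta)$ as a Schur complement and then reduce the full-rank claim to a non-degeneracy statement about a Gaussian vector under the prior $\mu_0$. Fix $\theta \not\in T$, where $T = \set{\theta_1,\dots,\theta_{n-1}}$, and consider the random vector obtained by stacking $x_\theta \in \reals^d$ together with the (random) observation functionals $w_i := S_i^\top A_i x_{\theta_i} \in \reals^{m_i}$, $i=1,\dots,n-1$. Using $\textup{Cov}(x_\theta, w_j) = C_0(\theta,\theta_j) A_j^\top S_j$ and $\textup{Cov}(w_i, w_j) = S_i^\top A_i C_0(\theta_i,\theta_j) A_j^\top S_j$, one sees that these are exactly the blocks of $K_{n-1}(\theta)$ and $G_{n-1}$, so the joint prior covariance is the symmetric block matrix
\[
M := \begin{pmatrix} C_0(\theta,\theta) & K_{n-1}(\theta) \\ K_{n-1}^\top(\theta) & G_{n-1}\end{pmatrix},
\]
and $C_{n-1}(\theta,\theta) = C_0(\theta,\theta) - K_{n-1}(\theta) G_{n-1}^{-1} K_{n-1}^\top(\theta)$ is precisely the Schur complement of $G_{n-1}$ in $M$.

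By \cref{lemma:invertible}, $G_{n-1}$ is invertible; being a covariance matrix it is positive semidefinite, hence positive definite. The standard Schur complement criterion then gives that positive-definiteness of $M$ implies positive-definiteness of its Schur complement $C_{n-1}(\theta,\theta)$, so it suffices to show $M \succ 0$. I would do this by a variance argument: for coefficients $a \in \reals^d$ and $c_i \in \reals^{m_i}$, the quadratic form associated with $M$ equals the prior variance of the linear functional $L := a^\top x_\theta + \sum_{i=1}^{n-1} c_i^\top w_i = a^\top x_\theta + \sum_{i=1}^{n-1} (A_i^\top S_i c_i)^\top x_{\theta_i}$, which is
\[
\textup{Var}(L) = \sum_{p,q} \alpha_p^\top C_0(\vartheta_p, \vartheta_q) \alpha_q ,
\]
where $(\vartheta_p, \alpha_p)$ ranges over $(\theta, a)$ and over $(\theta_i,\, A_i^\top S_i c_i)$ for $i = 1,\dots,n-1$.

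The points $\vartheta_p$ are pairwise distinct, since the training parameters are distinct and $\theta \not\in T$. Interpreting positive-definiteness of $C_0$ in the strict sense---that $\sum_{p,q}\alpha_p^\top C_0(\vartheta_p,\vartheta_q)\alpha_q > 0$ whenever the $\vartheta_p$ are distinct and the $\alpha_p$ are not all zero---I conclude that $\textup{Var}(L) = 0$ forces $a = 0$ and $A_i^\top S_i c_i = 0$ for every $i$. Because $A_i$ is invertible and $S_i$ has linearly independent columns, $A_i^\top S_i$ has full column rank, so $A_i^\top S_i c_i = 0$ yields $c_i = 0$. Thus the quadratic form vanishes only when all coefficients vanish, i.e.\ $M \succ 0$, and therefore $C_{n-1}(\theta,\theta)$ is positive definite and in particular full rank.

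The main obstacle here is conceptual rather than computational: pinning down the correct notion of positive-definiteness for the matrix-valued kernel $C_0$ and confirming the distinctness of the parameters, so that the strict inequality can legitimately be invoked. (If $C_0$ were only positive semidefinite the conclusion can fail, so the strict reading is essential.) The remaining ingredients---identifying the Schur complement and converting full column rank of $A_i^\top S_i$ into vanishing of the $c_i$---are routine.
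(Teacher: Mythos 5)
Your proof is correct, and it takes a somewhat different route from the paper's, though both hinge on the same two ingredients: strict positive-definiteness of the kernel Gram matrix at the distinct points $T \cup \{\theta\}$, and full column rank of the blocks $A_i^\top S_i$. The paper also begins by recognising $C_{n-1}(\theta,\theta)$ as the Schur complement of $G_{n-1}$ in the joint block matrix, but it then proceeds purely matrix-algebraically: it analyses the \emph{other} Schur complement $M_1/C_0(\theta,\theta)$, factorising it as $(\mathcal{S}_{n-1}^\top\mathcal{A}_{n-1})\,[M_2/C_0(\theta,\theta)]\,(\mathcal{S}_{n-1}^\top\mathcal{A}_{n-1})^\top$ where $M_2$ is the kernel Gram matrix, deduces positive-definiteness of that factorisation from the rank identity in the proof of \cref{lemma:invertible}, and finally transfers nonsingularity to $C_{n-1}(\theta,\theta) = M_1/G_{n-1}$ via the matrix inversion lemma. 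You instead prove directly that the full joint matrix $M$ is positive definite by interpreting its quadratic form as the prior variance of the linear functional $a^\top x_\theta + \sum_i (A_i^\top S_i c_i)^\top x_{\theta_i}$ and invoking the strict (distinct-points) reading of positive-definiteness of $C_0$ — the same reading the paper uses implicitly when it asserts $M_2 \succ 0$ — and then apply the Schur-complement criterion once. Your route is shorter and probabilistically transparent, avoids the nested Schur complements and the matrix inversion lemma, and as a bonus subsumes \cref{lemma:invertible} (taking $a=0$ in your variance argument gives $G_{n-1}\succ 0$ directly); the paper's route keeps everything at the level of matrix factorisations and reuses intermediate objects (the rank computation and the block decompositions) that reappear in its subsequent proofs of \cref{lem:iterCn,thm:rank(C_n)}. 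Your closing caveats — that the strict notion of kernel positive-definiteness and the pairwise distinctness of $\{\theta_1,\dots,\theta_{n-1},\theta\}$ are essential — are exactly the hypotheses the paper relies on as well.
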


The next result shows that when evaluated at the last point contained in the training set, the posterior covariance is rank-deficient.
Note that this holds more generally for any point in the training set (i.e.\ $C_n(\theta_i, \theta_i)$ is rank deficient for any $i=1,\dots,n$). 
However only the last point is relevant for subsequent developments.
\begin{theorem} \label{thm:rank(C_n)}
	The rank of $C_{n}(\theta_n,\theta_n)$ is $d - m_n$. Its null space is spanned by the columns of $A^\top_n S_n $.
\end{theorem}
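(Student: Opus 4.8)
The plan is to exploit the sequential structure of the Gaussian posterior, writing $C_n(\theta_n,\theta_n)$ as a single linear-Gaussian update of the marginal posterior at $\theta_n$ obtained from the first $n-1$ systems. Because the $n$th observation $S_n^\top A_n x_{\theta_n} = y_n$ depends only on $x_{\theta_n}$, conditioning the full process on it updates the marginal at $\theta_n$ exactly as if one conditioned $\mathcal{N}(\bar{x}_{n-1}(\theta_n), C_{n-1}(\theta_n,\theta_n))$ on this noise-free observation. Writing $\Sigma := C_{n-1}(\theta_n,\theta_n)$ and $P := A_n^\top S_n \in \reals^{d \times m_n}$, the standard formula for Gaussian conditioning on noise-free linear observations then gives
\begin{equation*}
    C_n(\theta_n,\theta_n) = \Sigma - \Sigma P (P^\top \Sigma P)^{-1} P^\top \Sigma .
\end{equation*}
The two ingredients that make this well defined are that $\Sigma$ is positive definite, which holds by \cref{thm:rank(C_n-1)} since $\theta_n \notin T$, and that $P$ has full column rank $m_n$, which follows from invertibility of $A_n$ and linear independence of the columns of $S_n$; together these guarantee that $P^\top \Sigma P$ is positive definite and hence invertible.

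First I would verify the inclusion $\spanspace(P) \subseteq \nullspace(C_n(\theta_n,\theta_n))$. This is immediate from the displayed formula, since
\begin{equation*}
    C_n(\theta_n,\theta_n) P = \Sigma P - \Sigma P (P^\top \Sigma P)^{-1} (P^\top \Sigma P) = \Sigma P - \Sigma P = 0 .
\end{equation*}
The same conclusion can be reached directly from the block formulas of \cref{prop:posterior}: one checks that $K_n^\top(\theta_n) A_n^\top S_n$ equals the final block column of $G_n$, so that $G_n^{-1} K_n^\top(\theta_n) A_n^\top S_n$ selects that block and $K_n(\theta_n) G_n^{-1} K_n^\top(\theta_n) A_n^\top S_n = C_0(\theta_n,\theta_n) A_n^\top S_n$, which cancels the leading term. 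Since $P$ has $m_n$ linearly independent columns, this already yields $\matrank(C_n(\theta_n,\theta_n)) \le d - m_n$.

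To obtain the matching lower bound and the exact null space I would diagonalise the update through a symmetric factorisation $\Sigma = \Sigma^{1/2}\Sigma^{1/2}$, available because $\Sigma$ is positive definite. Setting $Q := \Sigma^{1/2} P$, the formula rewrites as
\begin{equation*}
    C_n(\theta_n,\theta_n) = \Sigma^{1/2}\left(I - Q (Q^\top Q)^{-1} Q^\top\right)\Sigma^{1/2},
\end{equation*}
where the central factor is the orthogonal projector onto $\spanspace(Q)^\perp$. Since $\Sigma^{1/2}$ is invertible, $Q$ has the same rank as $P$, namely $m_n$, so this projector has rank $d - m_n$; conjugation by the invertible $\Sigma^{1/2}$ preserves rank, giving $\matrank(C_n(\theta_n,\theta_n)) = d - m_n$. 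For the null space, $C_n(\theta_n,\theta_n) v = 0$ is equivalent (again using invertibility of $\Sigma^{1/2}$) to $\Sigma^{1/2} v \in \spanspace(Q) = \spanspace(\Sigma^{1/2} P)$, i.e.\ to $v \in \spanspace(P)$. Hence $\nullspace(C_n(\theta_n,\theta_n)) = \spanspace(A_n^\top S_n)$, as claimed.

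I expect the main obstacle to be justifying the single-step Schur-complement form rigorously, rather than the linear algebra that follows from it. Concretely, one must argue that the marginal posterior covariance at $\theta_n$ under all $n$ datasets coincides with conditioning the $(n-1)$-step marginal at $\theta_n$ on the final observation; this is the standard sequential-consistency property of Gaussian conditioning, valid because the last observation is a deterministic linear functional of $x_{\theta_n}$ alone, but it should be stated carefully. Once the recursive form is in place, positive-definiteness of $C_{n-1}(\theta_n,\theta_n)$ from \cref{thm:rank(C_n-1)} is exactly what prevents the null space from being larger than $\spanspace(A_n^\top S_n)$: without it, earlier systems could pin down additional functionals of $x_{\theta_n}$ and enlarge the kernel. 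If one prefers to avoid the sequential argument, the same recursive identity can instead be derived directly from \cref{prop:posterior} by a block/Schur-complement manipulation of $G_n$, which would be the alternative route.
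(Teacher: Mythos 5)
Your proposal is correct, and its core recursion is exactly the paper's own key lemma (\cref{lem:iterCn}): the paper derives
\begin{equation*}
C_n(\theta_n,\theta_n) = C_{n-1}(\theta_n,\theta_n) - C_{n-1}(\theta_n,\theta_n) A_n^\top S_n \left(S_n^\top A_n C_{n-1}(\theta_n,\theta_n) A_n^\top S_n\right)^{-1} S_n^\top A_n C_{n-1}(\theta_n,\theta_n)
\end{equation*}
by the block/Schur-complement manipulation of $G_n$ that you mention as your fallback route, whereas you justify it probabilistically via sequential Gaussian conditioning (valid here, as you note, because the $n$th observation is a noise-free linear functional of $x_{\theta_n}$ alone). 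Where you genuinely diverge is in extracting the rank and null space from this formula. The paper forms the $2\times 2$ block matrix $M_3$ with blocks $S_n^\top A_n C_{n-1} A_n^\top S_n$, $S_n^\top A_n C_{n-1}$, $C_{n-1} A_n^\top S_n$, $C_{n-1}$, invokes Guttman's rank additivity theorem to get $\matrank(M_3) = \matrank(S_n^\top A_n C_{n-1} A_n^\top S_n) + \matrank(C_n(\theta_n,\theta_n))$, computes $\matrank(M_3)=d$ by a factorisation argument, and then establishes the null space separately by the direct computation $C_n(\theta_n,\theta_n) A_n^\top S_n = 0$ (your block-column identity for $G_n^{-1}K_n^\top(\theta_n)A_n^\top S_n$ is precisely the paper's \cref{eq:G_n_identity}) followed by rank--nullity. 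Your alternative — writing $C_n(\theta_n,\theta_n) = \Sigma^{1/2}\left(I - Q(Q^\top Q)^{-1}Q^\top\right)\Sigma^{1/2}$ with $Q = \Sigma^{1/2}A_n^\top S_n$ and reading off rank and kernel from the orthogonal projector under conjugation by the invertible $\Sigma^{1/2}$ — delivers both conclusions in a single step and makes the geometry transparent (the posterior covariance is a congruence transform of a rank-$(d-m_n)$ projector), at the cost of introducing the symmetric square root; the paper's route stays entirely within block-matrix identities and rank arithmetic, needing no factorisation of $C_{n-1}$. Both arguments lean on \cref{thm:rank(C_n-1)} in the same essential way, so there is no gap in your proposal.
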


The above theorem explicitly provides the null space of the posterior covariance. This motivates our next result, which states that the posterior mean from the companion regression method is equal to the actual solution of the linear system, in the null space of the posterior covariance.
\begin{theorem} \label{thm:nullC_n}
    In the null space of $C_n(\theta_n, \theta_n)$ it holds that $\bar{x}_n(\theta_n) = x^\star_n$, i.e. 
    $$S_n^\top A_n x_n(\theta_n) = S_n^\top A_n x^\star_n.$$
\end{theorem}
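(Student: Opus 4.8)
The plan is to reduce the ``in the null space'' statement to the explicit equality $S_n^\top A_n \bar{x}_n(\theta_n) = S_n^\top A_n x_n^\star$ and then verify the latter by a direct block-matrix computation. By \cref{thm:rank(C_n)} the null space of $C_n(\theta_n,\theta_n)$ is exactly the column space of $A_n^\top S_n$, so any null vector has the form $v = A_n^\top S_n w$. The requirement $v^\top \bar{x}_n(\theta_n) = v^\top x_n^\star$ for all such $v$ is then equivalent to $w^\top S_n^\top A_n \bar{x}_n(\theta_n) = w^\top S_n^\top A_n x_n^\star$ for every $w$, i.e.\ to the stated identity $S_n^\top A_n \bar{x}_n(\theta_n) = S_n^\top A_n x_n^\star$. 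This is precisely the ``i.e.'' in the statement, so it suffices to establish this projected equality.

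For the main computation I would start from the posterior mean in \cref{prop:posterior}, $\bar{x}_n(\theta_n) = x_0(\theta_n) + K_n(\theta_n) G_n^{-1} z_n$, and left-multiply by $S_n^\top A_n$. The crucial observation is that $S_n^\top A_n K_n(\theta_n)$ coincides with the $n$th block-row of $G_n$: since $K_n(\theta_n)$ has $j$th block $C_0(\theta_n,\theta_j) A_j^\top S_j$, left-multiplying by $S_n^\top A_n$ produces the blocks $S_n^\top A_n C_0(\theta_n,\theta_j) A_j^\top S_j$, which are exactly the $(n,j)$ blocks of $G_n$. Consequently $S_n^\top A_n K_n(\theta_n) G_n^{-1}$ is the $n$th block-row of $G_n G_n^{-1} = I$, namely the block selector $[0,\dots,0,I_{m_n}]$; this step uses invertibility of $G_n$, guaranteed by \cref{lemma:invertible}. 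Applying the selector to $z_n$ extracts its $n$th block $S_n^\top A_n(x_n^\star - x_0(\theta_n))$, where I have used $y_n = S_n^\top A_n x_n^\star$ from \cref{eq:yi}. Adding back the $S_n^\top A_n x_0(\theta_n)$ term cancels the $-S_n^\top A_n x_0(\theta_n)$ contribution and leaves $S_n^\top A_n x_n^\star$, as required.

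The argument involves no genuine analytic difficulty; the only obstacle is the careful block-index bookkeeping needed to recognise that pre-multiplying $K_n(\theta_n)$ by $S_n^\top A_n$ reproduces exactly one block-row of $G_n$. Once this is seen, the telescoping $(\text{row of }G_n)\,G_n^{-1} = (\text{row of }I)$ makes the interpolation property immediate and exact, reflecting the fact that the posterior mean interpolates the noise-free observations $y_n = S_n^\top A_n x_n^\star$ at the training input $\theta_n$. I would keep the invertibility of $G_n$ explicit, since $G_n^{-1}$ is needed to form the selector, and flag that the same computation goes through verbatim at any training index $i$, giving the more general interpolation remark noted before the theorem.
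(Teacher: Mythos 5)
Your proposal is correct and follows essentially the same route as the paper's proof: both apply $S_n^\top A_n$ to the posterior mean and exploit the fact that $S_n^\top A_n K_n(\theta_n)$ is the $n$th block-row of $G_n$, so that $S_n^\top A_n K_n(\theta_n) G_n^{-1}$ becomes the block selector $\begin{bmatrix} 0 & \cdots & 0 & I_{m_n} \end{bmatrix}$ extracting the last block of $z_n$. The only cosmetic difference is that the paper invokes this selector identity in its (transposed) form \cref{eq:G_n_identity}, already established in the proof of \cref{thm:rank(C_n)}, whereas you derive it inline; your explicit reduction of the null-space statement to the projected equality via \cref{thm:rank(C_n)} is a point the paper leaves implicit in the ``i.e.''\ of the theorem statement.
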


This result provides us the motivation to use the combination of the posterior predictive mean and covariance as initial guess and the preconditioner for \ac{pCG} (see \cref{alg:pCG}).
As the posterior mean is already correct in the null space of the posterior covariance, using the posterior covariance as the preconditioner allows the fast exploration of the unexplored directions where the solution is yet to be identified. 
In fact, the next theorem shows that this combination is essential to use the posterior covariance as preconditioner for \ac{pCG}. 

While \ac{pCG} has been shown to be well-defined for positive semi-definite matrices \citep{kaasschieter_preconditioned_1988,hayami_convergence_2020}, and for projection-type rank-deficient preconditioners \citep{frank_construction_2001}, using a positive semi-definite preconditioner that does not commute with $A$ has not been explored in the literature, to the best of our knowledge. 
In the following result we prove that when a singular preconditioner is used in \ac{pCG}, the algorithm still converges to the truth provided that the initial guess satisfies a specific condition.

\begin{theorem} \label{thm:pCG_spsd_pc}
    Given a symmetric positive semi-definite preconditioner $P$, let $U_1$ denote a matrix whose columns form an orthonormal basis of the range of $P$ and $U_2$ a matrix whose columns form an orthonormal basis of its null space.
    Consider solving the linear system  $A x^\star = b$ using pCG from \cref{alg:pCG}.
    We have that if $U_2^\top x_0 = U_2^\top x^\star$, then pCG with singular preconditioner $P$ has iterates
    $$x_k = U_1\tilde{x}_k + U_2 U_2^\top x^\star$$
    where $\tilde{x}_k$ are iterates from applying pCG to a modified system $U_1^\top A U_1 x^\star = \bar{b}$, (where $\bar{b} = U_1^\top b - U_1^\top A U_2 U_2^\top x_0$),
    with positive-definite preconditioner $U_1^\top P U_1$.
\end{theorem}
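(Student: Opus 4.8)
The plan is to prove the claim by induction on the iteration index $k$, showing that each pCG quantity (iterate, residual, preconditioned residual, and search direction) respects the orthogonal splitting $\reals^d = \operatorname{range}(P) \oplus \operatorname{null}(P)$ induced by the symmetric semi-definite $P$, and that its component in $\operatorname{range}(P)$ exactly reproduces the reduced pCG recursion. Because $P$ is symmetric positive semi-definite, its range and null space are orthogonal complements, so $[U_1,U_2]$ is orthogonal: $U_1^\top U_1 = I$, $U_2^\top U_2 = I$, $U_1^\top U_2 = 0$, and $U_1 U_1^\top + U_2 U_2^\top = I$. I would first record two structural facts. Writing $\tilde P := U_1^\top P U_1$, we have $P = U_1 \tilde P U_1^\top$, because $U_1 U_1^\top$ is the orthogonal projector onto $\operatorname{range}(P)$ and $P U_2 = 0$ gives $P(U_1 U_1^\top) = P = (U_1 U_1^\top) P$; moreover $\tilde P$ is positive definite, since a symmetric positive semi-definite matrix is strictly positive definite on its own range. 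Likewise $\tilde A := U_1^\top A U_1$ inherits positive definiteness from $A$, so the reduced system and preconditioner form a genuine SPD pair and the reduced pCG recursion is well-defined with solution $\tilde x^\star = U_1^\top x^\star$.

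Next I would set up the induction with the four identities $x_k = U_1\tilde x_k + U_2 U_2^\top x^\star$, $\;U_1^\top r_k = \tilde r_k$, $\;z_k = U_1\tilde z_k$, and $p_k = U_1 \tilde p_k$, where the tilde quantities are the reduced-pCG iterates initialised at $\tilde x_0 = U_1^\top x_0$. The base case uses the hypothesis $U_2^\top x_0 = U_2^\top x^\star$ directly: it gives $x_0 = U_1 U_1^\top x_0 + U_2 U_2^\top x^\star$, and a short computation using $\bar b = U_1^\top b - U_1^\top A U_2 U_2^\top x_0$ shows $U_1^\top r_0 = \bar b - \tilde A \tilde x_0 = \tilde r_0$. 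The identities $z_0 = U_1\tilde z_0$ and $p_0 = U_1\tilde p_0$ then follow from $z_0 = P r_0 = U_1 \tilde P \tilde r_0$ together with $p_0 = z_0$.

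For the inductive step I would push each identity through one sweep of the algorithm. The key leverage is that orthonormality of $U_1$ collapses every inner product defining the scalars: $r_k^\top z_k = \tilde r_k^\top \tilde z_k$ and $p_k^\top A p_k = \tilde p_k^\top \tilde A \tilde p_k$, so $\alpha_k = \tilde\alpha_k$, and analogously $\beta_k = \tilde\beta_k$. Substituting these into $x_{k+1} = x_k + \alpha_k p_k$, $\;r_{k+1} = r_k - \alpha_k A p_k$, $\;z_{k+1} = P r_{k+1}$ (here invoking $P = U_1 \tilde P U_1^\top$), and $p_{k+1} = z_{k+1} + \beta_k p_k$, then projecting with $U_1^\top$, reproduces the reduced recursion, while the $U_2$-component of the iterate is untouched because every $z_k$ and hence every $p_k$ lies in $\operatorname{range}(P)$, so $U_2^\top p_k = 0$. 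This last observation is really the heart of the argument and the one place to be careful: it pins $U_2^\top x_k$ at its initial value for all $k$, so the hypothesis $U_2^\top x_0 = U_2^\top x^\star$ forces $U_2 U_2^\top x_k = U_2 U_2^\top x^\star$ throughout. I do not anticipate a genuine obstacle beyond bookkeeping; the only subtlety worth flagging is well-definedness of the scalars, i.e.\ that the denominators $p_k^\top A p_k = \tilde p_k^\top \tilde A \tilde p_k$ are nonzero until convergence, which follows from the positive-definiteness of $\tilde A$ established at the outset.
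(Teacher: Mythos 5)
Your proof is correct and follows essentially the same route as the paper's: both split $\reals^d$ along $\operatorname{range}(P)\oplus\operatorname{null}(P)$, observe that every $z_k$ and hence every $p_k$ lies in $\operatorname{range}(P)$ so that the $U_2$-component of the iterate stays frozen at $U_2^\top x_0 = U_2^\top x^\star$, and identify the $U_1$-component with the pCG recursion for the reduced system $U_1^\top A U_1$. The only differences are presentational: the paper works in the eigenbasis of $P$ (so the reduced preconditioner appears as the diagonal block $\Lambda_1$) and unrolls the algorithm line by line before computing the limit, whereas your induction with $\tilde{P} = U_1^\top P U_1$ handles an arbitrary orthonormal basis of the range, matching the theorem statement as written slightly more directly.
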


The theorem stated above has some relation to \emph{projected CG} \citep{Zheng2020}, which considers using singular preconditioners within nonlinear CG methods for convex optimisation to constraints that ensure the iterate remains in the feasible set.
Our focus on the linear setting allows us to establish stronger results than can be applied in the more general nonlinear CG setting.

As  \cref{thm:nullC_n} guarantees that our posterior mean satisfies this condition, we now have the liberty to use the posterior from the companion regression model to accelerate the conjugate gradient solver. 
In the next section we discuss some practical details.

\section{Modelling Choices} \label{sec:computation}
We now consider the critical problem of choosing the prior and search-directions.
Together these completely specify the companion regression model for the iterative solver.

\subsection{Choices of Prior}

We mostly ignore the task of choosing the prior mean.
While this will clearly have a strong influence on the effectiveness of the initial guess provided by the companion regression model, it is generally problem-specific.
Instead, we focus on the prior covariance, for which several natural choices present themselves.

\paragraph{Tensor Product Prior}\label{sec:choices_prior}
The tensor product prior is perhaps the most common choice for multi-output GP regression \citep[e.g.][Section 4]{alvarez_kernels_2012}.
With this prior we take $C_0(\theta, \theta') = k(\theta, \theta') \Sigma$, where $\Sigma$ is a fixed symmetric positive-definite $d \times d$ matrix, while $k(\theta, \theta')$ is a scalar-valued positive-definite kernel function.
The $(i,j)$ block of $G_n$ can therefore be simplified to $G_{n,(i,j)} = k(\theta_i, \theta_j) \times S_i^\top A_i \Sigma A_i^\top S_j$, so that $G_n$ can be factorised as
\begin{align*}
G_n =  diag \left[S_i^\top A_i\right]_{1 \leq i \leq n} 
\left[\Sigma \otimes k(T,T) \right] 
 diag \left[A_i^\top S_i\right]_{1 \leq i \leq n}
\end{align*}
where $k(T,T) \in \reals^{n \times n} $ is the kernel function on $T = \left(\theta_1, \theta_2, \dots \theta_n\right)$.

Unlike in standard multi-output regression problems, there is no particular structure that can be exploited in $G_n$ (i.e.\ $G_n$ does not have the form of a Kronecker product, whose inverse can be expressed as the product of the two matrix inverses).
This is because the information used is heterogeneous apart from in the special case $S_i^\top A_i$ is independent of $i$, e.g.\ if $S_i = A_i^{-1}$.

\paragraph{A Natural, Nonseparable Prior Covariance}

Motivated by analogy with the literature on probabilistic linear solvers, we propose an (impractical) \emph{nonseparable} prior covariance.
When $A_\theta$ is symmetric positive definite for all $\theta$, the prior $C_0(\theta, \theta) = A_\theta^{-1}$ has a special status as discussed in \cref{sec:probabilistic_linear_solvers}.
Thus, taking $L_\theta$ to be a factorisation of $A_\theta$ such that $A_\theta = L_\theta L_\theta^\top$, this motivates the cross covariance $C_0(\theta, \theta') = L_\theta^{-1} k(\theta, \theta') L_{\theta'}^{-T}$. 
Thus, we consider nonseparable priors of the form
\begin{align*}
C_0(\theta, \theta) = k(\theta, \theta) A_\theta^{-1} \qquad C_0(\theta, \theta') =  L_\theta^{-1} k(\theta, \theta') L_{\theta'}^{-1}
\end{align*}
where $k(\theta, \theta')$ is again a scalar-valued positive-definite kernel function.

Under this choice of prior note that we can simplify all but one of the matrices involved in computation of the posterior to remove explicit dependence on $A_i^{-1}$.
For example, we have that 
$$
G_n = \begin{pmatrix}
	S_1^\top A_1 S_1 & S_1^\top L_1 L_2^\top S_2 & \dots & S_1^\top L_1 L_n^\top S_n \\
	S_2^\top L_2 L_1^\top S_1 & S_2^\top A_2 S_2 & \dots & S_2^\top L_2 L_n^\top S_n \\
	\vdots & \vdots & \ddots & \vdots \\
	S_n^\top L_n L_1^\top S_1 & S_n^\top L_n L_2^\top S_2 & \cdots & S_n^\top A_n S_n
\end{pmatrix}.
$$
However, note that the above still depends on the factors $L_i$,
computation of which is still associated with a cubic cost, rendering this impractical.
Moreover the cancellation does not mitigate dependence of the posterior covariance $C_n(\theta, \theta)$ on $A_\theta^{-1}$, since it still has the form of a downdate of $A_\theta^{-1}$.
However we note that in several recent works \citep{wenger_posterior_2022} it has been shown that one can use posteriors of this form in downstream applications in such a way that $A_\theta^{-1}$ cancels.
We do not exploit this fact in this paper, but observe this only to highlight that this prior choice is also of practical, not just theoretical interest. 

Note that this choice of prior is still not completely general as it assumes that smoothness as a function of $\theta$ is identical for all components of the solution vector.
However, nothing in the methodology presented prevents using general prior covariances.

\subsection{Choices of Search Directions} \label{sec:choices_sd}

Thus far we have made no assumptions on $S_1, \dots, S_n$ apart from that they are linearly independent.
We now consider some natural choices.

\paragraph{Observation of the Solution}
Assuming we are willing to compute $x^\star_1, \dots, x^\star_n$, for the training set, it makes sense to consider the choice $S_i = A_i^{-1}$, corresponding to direct observation of the solution for each $\theta_i$. This simplifies the posterior mean and covariance to be:
\begin{align*}
	\bar{x}_n(\theta) &= x_0(\theta) + C_0(\theta, T) C_0(T, T)^{-1}  X^\star_n
	\\ C_n(\theta, \theta') &= C_0(\theta, \theta') - C_0(\theta, T) C_0(T, T)^{-1} C_0(\theta', T)^\top
\end{align*}
where $C_0(\theta, T) \in \reals^{d \times dn}$ is a block matrix with elements $C(\theta, \theta_i)$, $j = 1, 2, \dots n$, $C_0(T, T) \in \reals^{dn \times dn}$ is a black matrix with elements $C(\theta_i, \theta_j)$, $i, j = 1, 2, \dots n$; and $X^\star_n \in \reals^{dn}$ is the stacked form of solutions of linear systems $x^{\star}_i, i = 1, 2, \dots n$. 
 
This is equivalent to performing a multi-output Gaussian process regression with the parameter $\theta$ as the input variable and the solution $x^\star_{\theta}$ as the output variable, with $C_0$ as the prior kernel.

However, this choice compels us to update the companion regression model after the linear system has been solved, utilising no information about the current linear system being solved to condition on, while computing its initial guess and the preconditioner. We now propose choices of search directions that can more actively use the information from the linear system being solved.

\paragraph{Subset of Data}
Related to the above, we could also take $S_i$ to be given by observing a subset of coordinates of $b_i$.
Let $\mathbb{I}_n \subset 1\colon d$; we then take $S_i = I_{:,\mathbb{I}_{m_i}}$, i.e.\ the identity matrix with a subset of columns defined by $\mathbb{I}_{m_i}$.
Attention then turns to how we should select $\mathbb{I}_{m_i}$.
We take an approach similar to the one in \cite{Cockayne2021}. 
We select the coordinates of $b_i$ by minimising a heuristic based on fill distance between the new selected co-ordinates and the old co-ordinates in an augmented space of  co-ordinates and the parameters of the linear system. 

\paragraph{Krylov-Based Directions}
We can view the predictive distribution of $\mu_n$ at $\theta_{n+1}$ as providing a prior distribution for a probabilistic linear solver at that location.
For convenience let $\mu_{n,n+1}$ denote this predictive distribution.
We then have $$\mu_{n,n+1} \sim \mathcal{N}(\bar{x}_{n,n+1}, C_{n, n+1}),$$ where $\bar{x}_{n,n+1} = \bar{x}_n(\theta_{n+1})$ and $C_{n,n+1} = C_n(\theta_{n+1}, \theta_{n+1})$.

An appealing choice of directions would then be the ``adaptive'' BayesCG search directions.
These are obtained by running BayesCG with prior given by $\mu_{n,n+1}$, so that the prior mean / initial iterate is $\bar{x}_{n,n+1}$ and the prior covariance is $C_{n,n+1}$. Using \cref{eq:bayescg:sd}, this leads to search directions of the form
\begin{align*}
s_ i &= r_{i-1} - r_{i-1} A_{n+1} C_0(\theta_{n+1},\theta_{n+1}) A_{n+1}^\top s_{i-1} \cdot s_{i-1} \\ &- r_{i-1} A_{n+1}  K_n(\theta_{n+1}) G_n^{-1} K_n^\top(\theta_{n+1}) A_{n+1}^\top s_{i-1} \cdot s_{i-1}.
\end{align*}

\subsection{Practical Implementation}
There are few more nuances to consider while implementing this practically. We present the conjugate gradient (\textsc{pCG}) and algorithms for updating the companion regression model (\textsc{CompCG}) algorithm in \cref{alg:pCG} and \cref{alg:CompCG}. Optimising hyperparameters for the regression model forms an important part our algorithm. While we do not identify a single best method to do this, we discuss our approach as well as efficient implementation employing Cholesky factor updating in \cref{sec:practical}. 
Limiting $M_n$---the size of $G_n$---is important to keep our method computationally feasible, and we present a few ways to ensure this by limiting the size of the search direction matrices, limiting the frequency of updates and truncation of information in \cref{sec:train_size}. 
We also analyse the computational complexity of this companion regression model in \cref{sec:complexity}.

\section{Experiments} \label{sec:applications}
We now turn to empirical evaluation of the novel solver.
In \cref{sec:sim_study} we perform a simulation study in an idealised setting, while in \cref{sec:optim} we consider application to a more realistic GP hyperparameter optimisation problem. A python implementation of this solver is available on \texttt{GitHub}\footnote{\url{https://github.com/hegdedisha/learning_related_systems}}.

\subsection{Simulation Study} \label{sec:sim_study}

\paragraph{Experimental Setup}

For this simulation study, we aim to evaluate performance in the correctly specified setting. 
To simulate an optimisation problem let $\Theta = \reals^{d'}$, $d'=200$. 
We take $\theta_i = \theta_{i-1} + \frac{0.05\epsilon_i}{i}$, where $\epsilon_i \in \reals^{d'}$ with $j$\textsuperscript{th} component  $\epsilon_{ij}\sim \mathcal{U}(0,1)$ ensuring that the parameters become closer together for larger $i$. 
We then sample $\{x_{\theta_i}\}_{i=1}^{n}$ jointly from the prior. 
The prior mean was set to zero, and we used the tensor product prior covariance as described in \cref{sec:choices_prior}, with $\Sigma = I_d$ and $k$ as Matèrn $\frac{3}{2}$ kernel.
The matrix $A_\theta$ of dimension $d=500$ is defined as $A_{\theta} = U_1 \Lambda_{\theta} U_1^\top $, where $U_1$ is an orthogonal matrix drawn from the Haar measure (fixed for all $\theta$), and $\Lambda_{\theta} = \textup{diag}([\theta; U])$ where $U \in \reals^{d-d'}$ with $U_{j} \sim \mathcal{U}(0.8,100)$ (again fixed for all $\theta$).
We then compute the right-hand side from $b_\theta = A_\theta x_\theta$.  To explore the complete potential of our method, we do not employ any cautious updating strategies (\cref{sec:train_size}) in this section. We run the \ac{CG} iterations for all the methods up to the default relative tolerance of $10^{-5}$.

\begin{figure*}
    \includegraphics[width= 2\columnwidth]{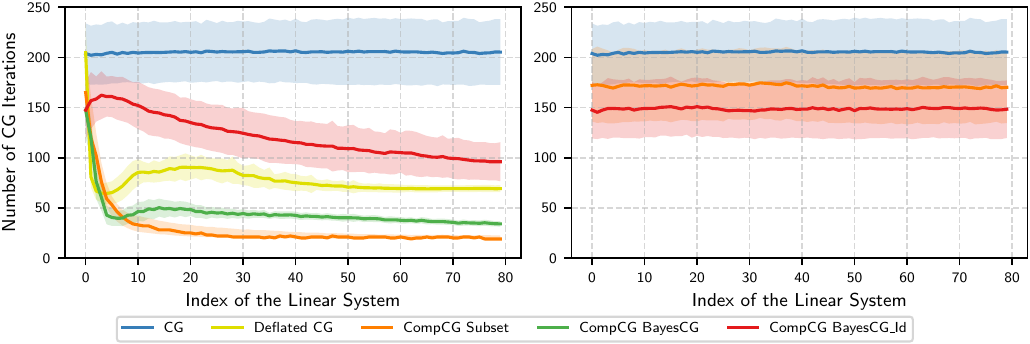}
	\begin{minipage}[t]{.48\linewidth}
	\centering
	\subcaption{Number of Conjugate Gradient Iterations for different choices of search directions, compared to CG and deflated CG.}\label{fig:cg_iters}
	\end{minipage}%
	\hspace{0.04\textwidth}
	\begin{minipage}[t]{.48\linewidth}
	\centering
	\subcaption{Number of Conjugate Gradient Iterations for different choices of search directions without learning, compared to CG.}\label{fig:cg_iters_nl}
	\end{minipage}
    \caption{Number of Conjugate Gradient Iterations for different choices of search directions, compared to CG.} 
\end{figure*}

We first evaluate the performance of different choices of priors and search directions. \cref{fig:cg_iters} shows the number of CG evaluations required for each linear system for each of the methods, averaged over 100 runs, with 1 standard deviation as the shaded region. Subset and BayesCG refer to the search directions described in \cref{sec:choices_sd}, while BayesCG\_Id refers to the BayesCG search directions computed using zero prior mean and identity prior covariance, as a less computationally expensive varient of BayesCG. We also plot an additional baseline of deflated CG, a recycled deflation-based preconditioner obtained using 
$m$ Ritz vectors (as described in \citet{gaul_recycling_2014}, implemented from the Python package \texttt{krypy}) as a method that adapts according to the previously solved systems. 

We see that all three \ac{CompCG} variants take a fewer CG iterations to reach the required error tolerance. As the BayesCG search directions are more informative, CompCG BayesCG converges faster than BayesCG\_Id; this is because the former uses directions that are adapted to the predictive covariance, i.e.\ that are aware of what directions have already been explored by the regression model, while the latter are ignorant of this.
The subset search directions outperform the BayesCG search directions, however, representing a lower cost but computationally competitive alternative. We also see that the deflation based preconditioner marginally outperforms CompCG for initial linear systems, but as the model is trained more, CompCG (both subset and BayesCG variants) overtakes deflation preconditioner.

\begin{figure}
		\includegraphics[width= \columnwidth]{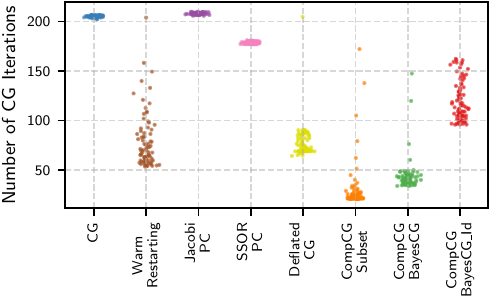}
    	\caption{Comparison of number of conjugate gradient iterations for different preconditioners.} \label{fig:cg_iters_pc}
\end{figure}

We also examine the effect of the global prior model, as opposed to the preconditioning effect by solely performing pre-iterations with search directions to determine an initial guess and preconditioner.
\cref{fig:cg_iters_nl} contrasts \cref{fig:cg_iters} by ignoring the regression model; that is, it assumes that the $x_\theta$ are independent across $\theta$.
Note that CompCG BayesCG is not displayed here, as the predictive distribution in this setting is simply the prior.
We can see that this performs significantly worse than \ac{CompCG} in \cref{fig:cg_iters}, suggesting that the regression model provides value over-and-above the preconditioning effect from the pre-iterations, i.e.\ some \emph{global} preconditioning effect has been learned from the data \emph{in addition to} the preconditioning effect from pre-iterations.
We also note that in this case BayesCG\_Id search directions outperform the Subset search directions. 
This is due to the fact that the Subset search directions borrows its strength from the learned information.

In \cref{fig:cg_iters_pc}, we compare the performance of the CompCG method (for different search directions) with linear solvers that are routinely used in practice:
\begin{enumerate}[nosep]
\item CG,
\item CG with warm-restarting,
\item Preconditioned CG with:
\begin{enumerate}
    \item Jacobi preconditioner,
    \item SSOR preconditioner with $\omega = 1$, the Gauss-Seidel preconditioner.
\end{enumerate}
\item Deflated CG,

\end{enumerate}
with each point indicating the total number of CG iterations across all linear systems. For further details on preconditioners used see \citet[Section 11.5.3]{golub_matrix_2013}. We notice that all three variants of \ac{CompCG} outperform both  warm restarting and the Jacobi preconditioner, while CompCG Subset and BayesCG also outperform the SSOR preconditioner and deflated CG.

\subsection{GP Hyperparameter Optimisation Problem} \label{sec:optim}

\begin{table*}

	\begin{tabular*}{\columnwidth}{l|lllllllllll}
		$d$ & 162  & 648 & 800 & 1035 & 1352 & 1800 & 2592 & 4050 & 7200 & 10368 & 16200 \\
		% \textcolor{gray}{\cline{1-8}}

        \grayline
		\multicolumn{11}{l}{\rule{0pt}{1em}Time for solving the linear systems (sec)} \\
		\grayline
		\rule{0pt}{1em}CG & 0.25  & 0.85 & 0.99 & 1.55 & 1.32 & 3.13 & 7.09 & 73.31 & 193.94 & 347.00 & 1671.25 \\
		\begin{tabular}{@{}l@{}}CompCG Subset\end{tabular}  & \textbf{0.21}  & \textbf{0.43} & 0.51 & \textbf{0.79} & 1.89 & 1.47 & 3.09 & 14.19 & \textbf{48.09}  & 131.81 & \textbf{177.77}  \\
		\begin{tabular}{@{}l@{}}CompCG BayesCG \end{tabular}  & 0.24 & 0.53 & \textbf{0.48} & 0.79 & \textbf{1.20} & \textbf{1.42} & \textbf{2.51} & \textbf{13.22} & 59.34  & \textbf{ 95.54}  & 272.27  \\
        \grayline
		\multicolumn{11}{l}{\rule{0pt}{1em}Average time for solving linear systems per optimiser iteration (sec)}  \\
		\grayline
		\rule{0pt}{1em}CG & 0.0031 & 0.0112 & 0.0118 & 0.0185 & 0.0220 & 0.0373 & 0.0844 & 0.8728 & 2.8521 & 6.6730 & 19.8959 \\
		\begin{tabular}{@{}l@{}}CompCG Subset\end{tabular} &  
        0.0035 & 0.0083 & 0.0106 & \textbf{0.0116} & 0.0224 & 0.0263 & 0.0514 & \textbf{0.2729} & \textbf{0.7514} & 1.6476 & 4.0402  \\
		\begin{tabular}{@{}l@{}}CompCG BayesCG \end{tabular}   & 
        \textbf{0.0029} & \textbf{0.0069} & \textbf{0.0086} & 0.0142 & \textbf{0.0215} & \textbf{0.0253} & \textbf{0.0419} & 0.2754 & 0.8726 & \textbf{1.4929} & \textbf{3.7815}  \\
		\grayline
		\multicolumn{11}{l}{\rule{0pt}{1em}Wall-time (sec)} \\
		\grayline
		\rule{0pt}{1em}CG & 0.82 & 4.78 & 4.99 & 7.50 & \textbf{8.75}  & 24.38 & 51.84 & 407.08 & 1475.93 & 4554.74 & 11241.33 \\   
		\begin{tabular}{@{}l@{}}CompCG Subset \end{tabular} & \textbf{0.62 }& \textbf{2.80 }& \textbf{3.41} & 6.46 & 10.86 & \textbf{13.79} & 59.34 & 404.10 & \textbf{1202.75} & \textbf{2963.18} & 8440.28  \\
		\begin{tabular}{@{}l@{}}CompCG BayesCG \end{tabular} &  0.81 & 2.90 & 3.84 & \textbf{6.09}& 14.50 & 23.33 & \textbf{49.27} & \textbf{297.70} & 1955.75 & 3135.40 & \textbf{7469.10}  \\

	\end{tabular*}
	\caption{Results of NLL optimisation for the regression problem in Section 6.2 for varying $d$.}\label{tbl:nll_times}
\end{table*}

We now consider a more applied setting in which related linear systems arise. 
We consider a \ac{GP} regression problem on the ERA5 global 2m temperature dataset \citep{era5_2023}, for a single timestamp on 1st January 2024. 
To perform this regression problem one must optimise the hyperparameters of the GP.
The standard approach to doing so is by minimising the log-marginal likelihood given by,
\begin{align*}
	\log p(y|T, \phi) &= -\frac{1}{2}y^\top k_\phi(T, T)^{-1} y - \frac{1}{2}\log\det k_\phi(T,T) \\
	&-\frac{n}{2}\text{log}(2\pi),
\end{align*}
\sloppy results in a sequence of linear systems of the form $k_\phi(T, T)z = y$ that must be solved (see \citet[Chapter 5]{rasmussen_gaussian_2005}).
We use a  Matèrn $\sfrac{3}{2}$ kernel function, setting the kernel hyperparameters to be the length-scale, amplitude and noise variance (collectively denoted by $\phi$). 
These define the parameter space $\Theta$ for the sequence of regression problems.
We minimise the negative log-likelihood using the \texttt{scipy} \citep{2020SciPy-NMeth} implementation of L-BFGS (as described in \cite{byrd1995limited,zhu1997algorithm}). 
The log determinant is approximated using approximation methods from \citet{wenger2022preconditioning}, to avoid the cubic cost this would otherwise represent.

% \todo{add math}

This dataset has a spatial resolution of approximately 31km, which results in a grid of approximately 1 million points. We consider subsets of this grid with varying resolution, leading to regression training sets of differing size. Unlike \cref{sec:sim_study}, we build a more conservative companion regression model here using measures explained in \cref{sec:train_size}. We update the model if the number of CG iterations is high, and reset the model if we see a significant jump in the NLL (indicated by very high number of CG iterations) indicating that a new computational regime has been encountered. We also restrict the size of the training dataset to 4, to ensure fast inversion of $G_n$. 
\cref{tbl:nll_times} shows that time required for the optimiser to converge for each of CG, CompCG Subset and CompCG BayesCG, for varying sizes $d$ of the linear systems to be solved (we do not include CompCG BayesCG\_Id here as it shows poor performance for the experiments in \cref{sec:app_simulation}). A more detailed version of this table is presented in the appendix as \cref{tbl:app_nll_times}. All the computations were performed on University of Southampton's Iridris5 cluster on a single node with 40 CPU cores.

Both versions of \ac{CompCG} outperform CG in terms of total time required for solving the linear systems and average time for solving linear systems per optimiser iteration for the all the cases, with the difference generally increasing as the $d$ increases. However, this dominance is not clearly reflected in wall-times. We speculate this is due to a combination of the optimiser taking a suboptimal path through the parameter space before reaching the minimum NLL and the randomness in the calculation of the determinant.
This is a pathology of the integration of \ac{CompCG} with the optimiser that we will aim to study in future work, and could perhaps be addressed by better log determinant approximation routines and more careful optimiser choice.

\section{Conclusion} \label{sec:conclusions}
We introduced a companion regression model for the conjugate gradient linear solver that provides an efficient initial guess and preconditioner pair for fast solution of related linear systems. We analysed the properties of this choice theoretically, and we highlighted through our simulations that this can drastically decrease the number of iterations of the linear solver and improve the computation time in problems that employ linear solvers in a loop, such as hyperparameter optimisation problems.

The primary limitation of this companion regression model is the additional time taken for the computation of the  regression model. Even though the efficiency of the initial guess and the preconditioner obtained reduces the number of conjugate gradient iterations required for the traditional solver, this reduction in time does not always make up for the additional time required to maintain the regression model. It should be noted that the regression model here is implemented Python; more efficient low-level implementation would likely address this in part.

Along with the more efficient implementation, there are several interesting opportunities for future work. Although we empirically see that conditioning on previously solved system to obtain the preconditioner improves the performance (\cref{fig:cg_iters,fig:cg_iters_nl}), our theory in \cref{sec:theory} about recycling subspaces does not explicitly provide us the motivation to use the learned data. We plan to explore this phenomenon more theoretically in our future work. We do not delve into tuning the hyperparameter for the regression model here and it would be interesting to explore more sophisticated hyperparameter optimisation techniques. We use a general separable covariance function for the regression model here, but it would be useful to exploit a problem-specific covariance function which would make the regression model more efficient. Another interesting approach would be to utilise thinned search directions to update the regression model, as \cref{fig:cg_iters_ns} indicates some search directions might be more informative than others. Finally, we focus on symmetric positive definite matrices and the \ac{CG} solver in the work. But as mentioned before, our regression model for the companion solver only requires invertible linear systems, and thus we plan to extend this companion solver framework to more general solvers in the future.

\section*{Acknowledgements}

JC was supported by EPSRC grant EP/FY001028/F1.

\bibliography{related_systems_references}
% \bibliography{citations}

\clearpage
\onecolumn
\begin{appendices}
    \section{Proofs} \label{sec:proofs}
    
\begin{proof}[Proof of \cref{lemma:invertible}]
    We define $T=\set{\theta_1, \dots, \theta_{n}} \subset \Theta$. Let 
	\begin{align*}
		\mathcal{A}^\top_n \mathcal{S}_n = \begin{pmatrix}
			A_1^\top S_1  & \cdots & 0 \\
			\vdots & \ddots & \vdots \\
			0 & \cdots & A_n^\top S_n 
		\end{pmatrix}
	\end{align*}
    and note that $G_n$ can be expressed as:
	\begin{equation*}
		G_n = (\mathcal{A}^\top_n \mathcal{S}_n)^\top
		C_0(T,T) 
		\mathcal{A}^\top_n \mathcal{S}_n .
	\end{equation*}
    It follows from \citet[][Theorem 4.2.1]{golub_matrix_2013} that $G_n$ is positive definite (and therefore invertible) if $\mathcal{A}^\top_n \mathcal{S}_n$ has full column rank.
	Clearly,
    \begin{align}\label{eq:rank}
        \matrank(\mathcal{A}^\top_n \mathcal{S}_n) &= \sum_{i=1}^{n} \matrank(A_i^\top S_i) 
		= \sum_{i=1}^{n} \matrank(S_i^\top A_i) 
		= \sum_{i=1}^{n} \matrank(S_i^\top) 
		% = \sum_{i=1}^{n} \matrank(S_i)
		=  \sum_{i=1}^{n} m_i = M_n
    \end{align}
    which follows from the facts that $A_i$ is invertible and $S_i$ have full column rank by assumption.
	Therefore $\mathcal{A}^\top_n \mathcal{S}_n$ has full column rank, completing the proof.
\end{proof}

\begin{proof} [Proof of \cref{thm:rank(C_n-1)}]
	We know that 
	\begin{align*}
		C_{n-1}(\theta,\theta) = C_0(\theta,\theta) - K_{n-1}(\theta) G_{n-1}^\top K_{n-1}^\top(\theta) .
	\end{align*}
	Consider the matrix $M_1$
	\begin{align*}
		M_1 = \begin{bmatrix}
			G_{n-1} && K_{n-1}^\top(\theta) \\
			K_{n-1}(\theta) && C_0(\theta,\theta)
		\end{bmatrix}
	\end{align*}
    For a finite set $T \subset \Theta$, $T=\set{\theta_1, \dots, \theta_{n_T}}$, we define 
    \begin{align*}
        C_0(T,\theta) = \begin{bmatrix}
            C_0(\theta_1,\theta) && \cdots && C_0(\theta_{n_T},\theta) 
        \end{bmatrix}
    \end{align*}
    and similarly
    \begin{align*}
        C_0(T, T) = \begin{bmatrix}
            C_0(\theta_1,\theta_1) && \cdots && C_0(\theta_1,\theta_{n_T}) \\
            \vdots && \hdots && \vdots \\
            C_0(\theta_{n_T},\theta_1) && \cdots && C_0(\theta_{n_T},\theta_{n_T}) 
        \end{bmatrix}
    \end{align*}
    so that $C_0(T, \theta) \in \reals^{(dn_T) \times d}$ and $C_0(T, T) \in \reals^{(dn_T) \times (dn_T)}$.
    Next, let
    \begin{align*}
        \mathcal{A}_{n-1} &= \begin{bmatrix}
            A_1 && \cdots && 0 \\
            \vdots && \hdots && \vdots \\
            0 && \cdots && A_{n-1}
        \end{bmatrix} \\
        \mathcal{S}_{n-1} &= \begin{bmatrix}
            S_1 && \cdots && 0 \\
            \vdots && \hdots && \vdots \\
            0 && \cdots && S_{n-1}
        \end{bmatrix}.
    \end{align*} 
    Also define $T_{n-1} = \set{\theta_1, \dots, \theta_{n_T-1}}$, and $ \theta = \theta_{n_T}$. 
    Thus we have that 
    \begin{align*}
        K_{n-1}(\theta) &= \mathcal{S}_{n-1}^\top \mathcal{A}_{n-1} C_0(T_{n-1}, \theta)\\
        G_{n-1} &= \mathcal{S}_{n-1}^\top \mathcal{A}_{n-1} C_0(T_{n-1}, T_{n-1}) \mathcal{A}_{n-1}^\top \mathcal{S}_{n-1}.
    \end{align*}
    Next consider the Schur complement $M_1/C_0(\theta,\theta)$, defined as
    % \todo{Introduce notation for Schur complement earlier?}
    \begin{align*}
        M_1/C_0(\theta,\theta) &= G_{n-1} - K_{n-1}^\top(\theta) C_0(\theta,\theta)^{-1} K_{n-1}(\theta)\\
        &= \mathcal{S}_{n-1}^\top \mathcal{A}_{n-1} C_0(T_{n-1},\varTheta) \mathcal{A}_{n-1}^\top \mathcal{S}_{n-1} - \mathcal{S}_{n-1}^\top \mathcal{A}_{n-1} C_0(T_{n-1},\theta) C_0(\theta,\theta)^{-1} C_0(\theta, T_{n-1}) \mathcal{A}_{n-1}^\top \mathcal{S}_{n-1}\\
        &= M_1^1 M_1^2 (M_1^1)^\top
    \end{align*}
    where
    \begin{align*}
        M_1^1 &:= \mathcal{S}_{n-1}^\top \mathcal{A}_{n-1} \\
        M_1^2 &:= C_0({T_{n-1}, T_{n-1}})  - C_0(T_{n-1},\theta)  C_0(\theta,\theta)^{-1} C_0(\theta,T_{n-1})
    \end{align*}
    Next we note that $M_1^2$ is again a Schur complement, this time of the matrix $M_2$ given by
    \begin{align*}
        M_2 &= \begin{bmatrix}
            C_0(T_{n-1}, T_{n-1}) & C_0({T_{n-1},\theta_{n}}) \\
            C_0({\theta,T_{n-1}})  &  C_0(\theta,\theta)
        \end{bmatrix}
    \end{align*}
    so that $M_1^2 = M_2 / C_0(\theta, \theta)$.
    By assumption that $C_0$ is positive definite, both $C_0(\theta, \theta)$ and $M_2$ are positive definite. 
    Thus, from \citet[Proposition 2.1]{gallier2010schur}, the Schur complement $M_1^2$ is positive-definite. 
    We have seen in \cref{eq:rank} that $M_1^1$ has full row-rank. 
    Therefore, from \citet[Theorem 4.2.1]{golub_matrix_2013}, $M_1/C_0({\theta,\theta})$ is positive-definite, and thus non-singular.
   
    Since $G_{n-1}$, $C_0(\theta_n,\theta_n)$ and $M_1/C_0(\theta_n,\theta_n)$ are non-singular, from the matrix inversion lemma \citep[Corollary 2.8.8.]{bernstein_matrix_2009}, we have that $C_{n-1}(\theta_n,\theta_n) = M_1/G_{n-1}$ is also non-singular. Thus rank($C_{n-1}(\theta_n,\theta_n)$) = $d$.  
    
\end{proof}

\begin{lemma}\label{lem:iterCn}
    Given the posterior %mean $\bar{x}_{n-1} (\theta)$ and 
    covariance $C_{n-1}(\theta,\theta) A_n^T S_n$ computed without including the evaluated location $\theta$, posterior computed including the evaluating location can be computed iteratively as  
    \begin{align} 
        % \bar{x}_{n} (\theta) &= \bar{x}_{n-1} (\theta) + C_{n-1}(\theta,\theta) A_n^T S_n  \left(S_n^T A_n C_{n-1}(\theta,\theta) A_n^T S_n \right)^{-1} (S_n^T A_n \bar{x}_{n-1} - y_n)  \notag \\
        C_n(\theta,\theta) &= C_{n-1}(\theta,\theta) - C_{n-1}(\theta,\theta) A_n^T S_n  \left(S_n^T A_n C_{n-1}(\theta,\theta) A_n^T S_n \right)^{-1} S_n^T A_n C_{n-1}(\theta,\theta)  
    \end{align}
\end{lemma}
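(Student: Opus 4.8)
The plan is to recognise the stated identity as the covariance update obtained by conditioning the order-$(n-1)$ posterior on the single additional observation carried by the $n$-th system, at the location where that system is placed (so that the evaluation point $\theta$ and the new data location coincide). Conceptually, this amounts to showing that the joint conditioning on all $n$ data points given in \cref{prop:posterior} may be performed sequentially: first condition on the first $n-1$ systems to obtain $C_{n-1}$, then treat $C_{n-1}$ as a prior covariance and condition on the remaining observation $S_n^\top A_n x(\theta) = y_n$. The rigorous route I would take is a block-matrix computation that makes this telescoping explicit.

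First I would partition the Gram matrix and cross-covariance of \cref{prop:posterior} according to the split between the first $n-1$ systems and the $n$-th,
\begin{align*}
G_n = \begin{pmatrix} G_{n-1} & B \\ B^\top & D \end{pmatrix}, \qquad K_n(\theta) = \begin{pmatrix} K_{n-1}(\theta) & k(\theta) \end{pmatrix},
\end{align*}
where $B = K_{n-1}^\top(\theta) A_n^\top S_n$, $D = S_n^\top A_n C_0(\theta, \theta) A_n^\top S_n$, and $k(\theta) = C_0(\theta, \theta) A_n^\top S_n$. Applying the standard block-inverse formula to $G_n^{-1}$ and expanding $C_n(\theta,\theta) = C_0(\theta,\theta) - K_n(\theta) G_n^{-1} K_n^\top(\theta)$, the four resulting terms reassemble into a single rank-$m_n$ downdate of the order-$(n-1)$ covariance,
\begin{align*}
C_n(\theta,\theta) = C_{n-1}(\theta,\theta) - \tilde{k}(\theta)\, \Sigma^{-1}\, \tilde{k}^\top(\theta),
\end{align*}
where $\Sigma = D - B^\top G_{n-1}^{-1} B$ is the Schur complement of $G_{n-1}$ in $G_n$ and $\tilde{k}(\theta) = k(\theta) - K_{n-1}(\theta) G_{n-1}^{-1} B$ is the residual cross-covariance.

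The key algebraic step is then to identify $\Sigma$ and $\tilde{k}(\theta)$ in terms of $C_{n-1}$. Using $B = K_{n-1}^\top(\theta) A_n^\top S_n$ together with $C_{n-1}(\theta,\theta') = C_0(\theta,\theta') - K_{n-1}(\theta) G_{n-1}^{-1} K_{n-1}^\top(\theta')$, both factor cleanly through $A_n^\top S_n$: one finds $\tilde{k}(\theta) = C_{n-1}(\theta,\theta) A_n^\top S_n$ and $\Sigma = S_n^\top A_n C_{n-1}(\theta,\theta) A_n^\top S_n$. Substituting these gives precisely the claimed formula. To conclude I would verify that $\Sigma$ is invertible: by \cref{thm:rank(C_n-1)} the matrix $C_{n-1}(\theta,\theta)$ is positive definite for $\theta \notin T$, and $A_n^\top S_n$ has full column rank (as established in \eqref{eq:rank} within the proof of \cref{lemma:invertible}), so $\Sigma = S_n^\top A_n C_{n-1}(\theta,\theta) A_n^\top S_n$ is positive definite and the inverse appearing in the statement is well defined.

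I expect the main obstacle to be the bookkeeping in the block-inverse expansion: verifying that the four terms produced by $K_n(\theta) G_n^{-1} K_n^\top(\theta)$ recombine into $K_{n-1}(\theta) G_{n-1}^{-1} K_{n-1}^\top(\theta)$ plus the single Schur-complement downdate, and in particular that $\tilde{k}(\theta)$ factors through $A_n^\top S_n$ rather than leaving residual dependence on the first $n-1$ systems. This is the step in which the equivalence between sequential and joint conditioning is actually established, and it is where the identity $B = K_{n-1}^\top(\theta) A_n^\top S_n$ does the essential work.
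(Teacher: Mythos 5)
Your proposal is correct and takes essentially the same route as the paper's own proof: both partition $G_n$ and $K_n(\theta)$ into the blocks for the first $n-1$ systems versus the $n$-th, identify the Schur complement of $G_{n-1}$ in $G_n$ as $S_n^\top A_n C_{n-1}(\theta,\theta) A_n^\top S_n$, justify its invertibility by combining \cref{thm:rank(C_n-1)} with the full column rank of $A_n^\top S_n$ from the proof of \cref{lemma:invertible}, and then apply the block matrix inversion lemma. Your write-up simply makes explicit the term-by-term recombination (via $\tilde{k}(\theta)$ and $\Sigma$) that the paper's terse proof leaves implicit.
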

\begin{proof}
    We can see that 
    \begin{align*}
        C_n(\theta,\theta) &= C_0(\theta,\theta) \\
        &- \begin{bmatrix} K_{n-1}(\theta) & C_0(T_{n-1}, \theta_n)\end{bmatrix} \begin{bmatrix} G_{n-1} &K_{n-1}(\theta_n)^\top A_n^\top S_n \\ (K_{n-1}(\theta_n)^\top A_n^\top S_n)^\top  & S_n^\top A_{n} C_0(\theta_n,\theta_n) A_n^\top S_n \end{bmatrix}^{-1} \begin{bmatrix} K_{n-1}(\theta)^\top \\ C_0(T_{n-1}, \theta_n)\end{bmatrix} 
    \end{align*}
    The Schur complement of the matrix of the matrix to be inverted simplifies to $S_n^\top A_n C_{n-1}(\theta_n, \theta_n) A_n^\top S_n$. As seen in proof of \cref{lemma:invertible}, using \cref{thm:rank(C_n-1)}, this Schur complement is invertible. Thus the matrix inversion lemma \citep[Corollary 2.8.8.]{bernstein_matrix_2009} can be used here. Using this block matrix inversion result to simplify the above expression gives us \cref{lem:iterCn}. 
\end{proof}

\begin{proof} [Proof of \cref{thm:rank(C_n)}]
	From \cref{lem:iterCn}, we have that
	\begin{align*}
		C_n(\theta_n,\theta_n) &= C_{n-1}(\theta_n,\theta_n) - C_{n-1}(\theta_n,\theta_n) A_n^\top S_n  \left(S_n^\top A_n C_{n-1}(\theta_n,\theta_n) A_n^\top S_n \right)^{-1} S_n^\top A_n C_{n-1}(\theta_n,\theta_n)  
	\end{align*}
Consider 
\begin{align*}
	M_3 = \begin{bmatrix} S_n^\top A_n C_{n-1}(\theta_n,\theta_n)A_n^\top S_n &&  S_n^\top A_n C_{n-1}(\theta_n,\theta_n) \\
	C_{n-1}(\theta_n,\theta_n) A_n^\top S_n && C_{n-1}(\theta_n,\theta_n)
	\end{bmatrix}
\end{align*}
Clearly $C_n(\theta_n,\theta_n) = M_3 / S_n^\top A_n C_{n-1}(\theta_n, \theta_n) A_n^\top S_n$.
From Guttmans rank additivity theorem \citep[Fact 6.5.6]{bernstein_matrix_2009}, we have that
\begin{equation} \label{eq:rank_additivity}
	\text{rank}(M_3) = \text{rank}(S_n^\top A_n C_{n-1}(\theta_n,\theta_n)A_n^\top S_n) + \text{rank}(C_n(\theta_n,\theta_n)).
\end{equation}
We now calculate $\text{rank}(S_n^\top A_n C_{n-1}(\theta_n,\theta_n)A_n^\top S_n)$: since from \cref{thm:rank(C_n-1)}, we have that $C_{n-1}(\theta_n,\theta_n)$ has full rank and is thus invertible. Since $A_n$ is also invertible, from \citet[Proposition 2.6.3]{bernstein_matrix_2009} we have that
\begin{align*}
	\text{rank}(S_n^\top A_n C_{n-1}(\theta_n,\theta_n)A_n^\top S_n) = \text{rank}(S_n) = m_n.
\end{align*}
Next we calculate $\text{rank}(M_3)$. 
Clearly $M_3$ can be decomposed as follows
\begin{align*}
	M_3 = \underbrace{\begin{bmatrix}
		S_n^\top A \\ I_d
	\end{bmatrix}}_{M_3^1}
	\underbrace{\begin{bmatrix}
		C_{n-1} (\theta,\theta)
	\end{bmatrix}}_{M_3^2}
	\underbrace{\begin{bmatrix}
		A_n^\top S_n && I_d
	\end{bmatrix}}_{(M_3^1)^\top}.
\end{align*}  
Here $M_3^1$ has full column rank and $M_3^2$ has full rank, thus $\matrank(M_3^1 M_3^2) = \matrank(M_3^2) = d$ \citep[Proposition 2.6.3 and 2.6.1]{bernstein_matrix_2009}.
Similarly, $(M_3^1)^\top$ has full row rank, and thus $\matrank(M_3) = \matrank((M_3^1 M_3^2)M_3^{1\top}) = \matrank(M_3^1 M_3^2) = d$. 
Plugging these numbers into \cref{eq:rank_additivity} and rearranging we obtain
\begin{align*}
	\text{rank}(C_n(\theta_n,\theta_n)) = d - m_n
\end{align*}
as required.

For the null space we simply apply $A_n^\top S_n$ on the right to show this directly:
\begin{align*}
	C_n(\theta_n, \theta_n) &= C_0(\theta_n, \theta_n) - K_n(\theta_n) G_n^{-1} K_n^\top(\theta_n) \\
	C_n(\theta_n, \theta_n) A_n^\top S_n &= C_0(\theta_n, \theta_n) A_n^\top S_n - K_n(\theta_n) G_n^{-1} K_n^\top(\theta_n) A_n^\top S_n
\end{align*}
Now examining the rightmost matrix we see that:
\begin{align*}
    G_n^{-1} K_n^\top(\theta_n) A_n^\top S_n &= \begin{bmatrix}
		G_{n-1} && K_n(\theta_{n-1})^\top A_n^\top S_n \\ S_n^\top A_n K_n(\theta_{n-1}) &&  S_n^\top A_n C_0(\theta_n,\theta_n) A_n^\top S_n
	\end{bmatrix}^{-1} \begin{bmatrix}
		K_n(\theta_{n-1})^\top A_n^\top S_n \\ S_n^\top A_n C_0(\theta_n,\theta_n) A_n^\top S_n
	\end{bmatrix}\\
    &= \begin{bmatrix}
        0 \\ I_{m_n}
    \end{bmatrix}\numberthis \label{eq:G_n_identity}
\end{align*}
since the matrix to which the inverse applied is a column of $G_n$.
Therefore
\begin{align*}
        C_n(\theta_n, \theta_n) A_n^\top S_n &= C_0(\theta_n, \theta_n) A_n^\top S_n - \begin{bmatrix}
			K_n(\theta_{n-1}) && C_0(\theta_n,\theta_n) A_n^\top S_n
		\end{bmatrix}
		\begin{bmatrix}
			0 \\ I_{m_n}
		\end{bmatrix}
	\\
	&= C_0(\theta_n, \theta_n) A_n^\top S_n - C_0(\theta_n, \theta_n) A_n^\top S_n \\
	&= 0.
\end{align*}
Thus $\spanspace(A_n^\top S_n) \in \nullspace( C_n(\theta_n, \theta_n))$. Since \\ $\matrank(C_n(\theta_n, \theta_n)) = d - m_n$, $\matnullity (C_n(\theta_n, \theta_n)) = m_n$. Since $\matrank(A_n^\top S_n) = m_n$, it follows that $\spanspace(A_n^\top S_n)$ is the whole null space of $C_n(\theta_n, \theta_n)$, completing the proof.

\end{proof}

\begin{proof} [Proof of \cref{thm:nullC_n}]
    Again we can show this directly by applying $S_n^\top A_n$ to $\bar{x}_n(\theta_n)$.
    We have
	\begin{align*}
        S_n^\top A_n \bar{x}_n(\theta_n) &= S_n^\top A_n x_0(\theta_n) + S_n^\top A_n K_n(\theta_n) G_n^{-1} z_n \\
        &= S_n^\top A_n x_0(\theta) + 
        \begin{bmatrix}
            0 && I_{m_n}
        \end{bmatrix} 
        \begin{bmatrix}
            z_{n-1} \\
            y_n - S_n^\top A_n x_0(\theta_n)
        \end{bmatrix}
    \end{align*}
    from \cref{eq:G_n_identity}. Therefore
    \begin{align*}
        S_n^\top A_n \bar{x}_n(\theta_n) &= S_n^\top A_n x_0(\theta) + y_n - S_n^\top A_n x_0(\theta_n) \\
        &= S_n^\top A_n x^\star(\theta_n)
	\end{align*}
as required.
\end{proof}

\begin{proof} [Proof of \cref{thm:pCG_spsd_pc}]
    The \textsc{pCG} algorithm for positive-definite preconditioner is defined as in \cref{alg:pCG}.
	% \begin{algorithmic}
	% 	\State $x_0$: initial guess
	% 	\State $r_0 = b - Ax_0$
	% 	\State $z_0 = Pr_0$
	% 	\State $p_0 = z_0$
	% 	\State $k = 0$
	% 	\While {$\| r_k \| > \epsilon $}
	% 		\State $\alpha_k = \frac{r_k^\top z_k}{r_{k-1}^\top z_{k-1}}$
	% 		\State $x_{k+1} = x_{k} + \alpha_k p_k $
	% 		\State $r_{k+1} = r_k - \alpha_k A p_k$
	% 		\State $z_{k+1} = P r_{k+1}$
	% 		\State $\beta_k = \frac{r_{k+1}^\top z_{k+1}}{r_{k}^\top z_{k}}$
	% 		\State $p_{k+1} = z_{k+1} + \beta_k p_{k}$
	% 	\EndWhile
	% 	\State $x = x_k$
	% \end{algorithmic}
	We prove here that $x_k \rightarrow x^*$ in this algorithm even when the preconditioner is positive-semi definite, given a specific structure of the initial guess $x_0$ as stated in the theorem.    
    
    First note that since $P$ is symmetric, it is diagonalisable, i.e.\ $P = U \Lambda U^\top$, where $U$ is a unitary matrix and $\Lambda$ is a diagonal matrix with diagonal entries as the eigenvalues of $P$. 
    As $P$ is a positive semi-definite matrix, we can write $\Lambda$  as
    \begin{align*}
        \Lambda = \begin{pmatrix}
            \Lambda_1 & 0 \\
            0 & 0
        \end{pmatrix}
    \end{align*}
    where $\Lambda_1$ is a diagonal matrix with positive entries, i.e., 
    \begin{align*}
        \Lambda_1 = \begin{pmatrix}
            \lambda_1 & &\\
            &\ddots \\
            &&\lambda_r
        \end{pmatrix}, \lambda_1 \geq \lambda_2 \dots \lambda_{r} \geq 0, r = \text{rank}(P) = \textup{dim}( \mathcal{R}(P) ).
    \end{align*} 
    Thus, $\Lambda_1$ is a symmetric positive definite matrix. 
    % We can also see that 
    % \begin{align*}
    %     \Lambda^\dagger = \begin{pmatrix}
    %         \Lambda_1^{-1} & 0 \\
    %         0 & 0
    %     \end{pmatrix}
    % \end{align*}
    % where $\Lambda^\dagger$ is the pseudo-inverse of $L$.
    We now define partitions $U_1$ and $U_2$ of matrix $U$, i.e.,
    \begin{align*}
        U = \begin{bmatrix}
            U_1 & U_2
        \end{bmatrix}
    \end{align*}
    % \begin{align*}	
    % \begin{pmatrix}
    %     U_1 = \begin{bmatrix}
    %         \bm{u}_1, \dots ,\bm{u}_r
    %     \end{bmatrix} \,
    %     U_2 = \begin{bmatrix}
    %         \bm{u}_{r+1}, \dots ,\bm{u}_d
    %     \end{bmatrix}
    % \end{pmatrix}
    % \end{align*}
    
    such that the columns of $U_1$ and $U_2$ are orthonormal bases of $\mathcal{R}(P)$ and $\mathcal{R}(P)^\perp$ respectively. We now re-express the linear system in terms of $U$:
    \begin{align*}
        \tilde{x} &= U^\top x \\
        \tilde{b} &= U^\top b \\
        \tilde{A} &= U^\top A U
    \end{align*}
    so that clearly $\tilde{A}\tilde{x} = \tilde{b}$.
    We now express the quantities used the \cref{alg:pCG} in terms of the projected system.
	Considering the initial steps, the initial residual is given by
    \begin{align*}
        % r_0 &= b - Ax_0 \\
        % &= U [U^\top b - U^\top A U U^\top x_0] \\
        % &= U [\tilde{b} - \tilde{A} \tilde{x}_0] \\
        % &= U \tilde{r}_0 \\
        \tilde{r}_0 &= U^\top r_0 
        % &= \begin{bmatrix}
        %     U_1^\top r_0 \\ U_2^\top r_0 
        %     \end{bmatrix} \\
        =	\begin{bmatrix}
                \tilde{r}_0^{(1)} \\ \tilde{r}_0^{(2)}
            \end{bmatrix}
    \end{align*}
    while
    \begin{align*}
        % z_0 &= Pr_0  \\
        % &= U \Lambda U^\top U \tilde{r}_0 \\
        % &= U \begin{bmatrix}
        %     \Lambda_1 & 0 \\ 0 & 0
        % \end{bmatrix} 
        % \begin{bmatrix}
        %     \tilde{r}_0^{(1)} \\ \tilde{r}_0^{(2)}
        % \end{bmatrix} \\
        % &= U \begin{bmatrix}
        %     \Lambda_1 \tilde{r}_0^{(1)} \\ 0
        % \end{bmatrix} \\
        % &= U \tilde{z}_0 = U\begin{bmatrix}
        %     \tilde{z}_0^{(1)} \\ \tilde{z}_0^{(2)}
        % \end{bmatrix} \\
		\tilde{z}_0 &= U^\top z_0 
		= \begin{bmatrix}
            \Lambda_1 \tilde{r}_0^{(1)} \\ 0
        \end{bmatrix} 
    \end{align*}
    and finally
    \begin{align*}
        % p_0 &= z_0 = U \tilde{z_0} = U \tilde{p}_0\\
        \tilde{p}_0 &=  U^\top p_0
        = \begin{bmatrix}
            \Lambda_1 \tilde{r}_0^{(1)} \\ 0
        \end{bmatrix} 
    \end{align*}
    Thus we see that in the basis defined by $U$, the first search direction modifies $\tilde{x}_0^{(1)}$ by taking a step in the direction $\Lambda_1 \tilde{r}_0^{(1)}$, but makes no change to $\tilde{x}_0^{(2)} = \tilde{x}^{\star (2)}$.

    Continuing to the iterates, proceeding inductively, 
    \begin{align*}
        % \alpha_k &= \frac{r_k^\top z_k}{r_{k-1}^\top z_{k-1}} \\
        \tilde{\alpha}_k &= \frac{\tilde{r}_k^\top \tilde{z}_k}{\tilde{r}_{k-1}^\top \tilde{z}_{k-1}} 
        % &= \frac{(\tilde{r}_k^{(1)})^\top \tilde{z}_k^{(1)} + (\tilde{r}_k^{(2)})^\top \tilde{z}_k^{(2)}}{(\tilde{r}_{k-1}^{(1)})^\top \tilde{z}_{k-1}^{(1)} + (\tilde{r}_{k-1}^{(2)})^\top \tilde{z}_{k-1}^{(2)}} \\
        = \frac{(\tilde{r}_k^{(1)})^\top \tilde{z}_k^{(1)}}{(\tilde{r}_{k-1}^{(1)})^\top \tilde{z}_{k-1}^{(1)}} 
    \end{align*}
    Note that both the numerator and denominator simplify as $\tilde{z}_k^{(2)} = 0$. Similarly,

    \begin{align*}
        % x_{k+1} &= x_{k} + \alpha_k p_k \\
        \tilde{x}_{k+1} &= U^\top x_{k+1} 
        % &= U^\top [x_{k} + \alpha_k p_k] \\
        % &= \tilde{x}_{k} + \tilde{\alpha}_k \tilde{p_k} \\
        % &= \begin{bmatrix}
            % \tilde{x}_{k}^{(1)} + \tilde{\alpha}_k \tilde{p_k}^{(1)} \\
            % \tilde{x}_{k}^{(2)} + \tilde{\alpha}_k \tilde{p_k}^{(2)} \\
        % \end{bmatrix} \\
        = \begin{bmatrix}
            \tilde{x}_{k}^{(1)} + \tilde{\alpha}_k \tilde{p_k}^{(1)} \\ \tilde{x}_k^{(2)}
        \end{bmatrix} 
    \end{align*}
    i.e. the second component of $\tilde{x}_{k+1}$ is unchanged, and
    \begin{align*}
        % r_{k+1} &= r_k - \alpha_k A p_k \\
        \tilde{r}_{k+1} &= U^\top r_{k+1} 
        % &= U^\top [r_k - \alpha_k A p_k]\\
        % &= \tilde{r}_{k} - \tilde{\alpha}_k  U^\top A U U^\top p_k \\
        % &=  \tilde{r}_{k} - \tilde{\alpha}_k  \tilde{A} \tilde{p}_k \\
        % &=  \tilde{r}_{k} - \tilde{\alpha}_k  \begin{bmatrix} U_1^\top AU_1 & U_1^\top AU_2 \\ U_2^\top AU_1 &  U_2^\top AU_2 \end{bmatrix}  \begin{bmatrix} \tilde{p}_k^{(1)} \\ \tilde{p}_k^{(2)} \end{bmatrix}\\ 
        = \begin{bmatrix}
            \tilde{r}_{k}^{(1)} -  \tilde{\alpha}_k  \tilde{A}^{(1)} \tilde{p}_k^{(1)} \\
            \tilde{\alpha}_k  U_2^\top A U_1 \tilde{p}_k^{(1)}
        \end{bmatrix}
    \end{align*}
    where the last line is due to $\tilde{p}_k^{(2)} = 0$.
    Next applying the preconditioner we have
    \begin{align*}
        % z_{k+1} &= P r_{k+1} \\
        \tilde{z}_{k+1} &=  U^\top P r_{k+1} 
        % &= U^\top U \begin{bmatrix}
        %     \Lambda_1 & 0 \\ 0 & 0
        % \end{bmatrix} U^\top r_{k+1} \\
        % &=  \begin{bmatrix}
        %     \Lambda_1 & 0 \\ 0 & 0
        % \end{bmatrix}
        % \begin{bmatrix}
        % \tilde{r}_{k}^{(1)} -  \tilde{\alpha}_k  \tilde{A}^{(1)} \tilde{p}_k^{(1)} \\
        % \tilde{\alpha}_k  U_2^\top A U_1 \tilde{p}_k^{(1)}
        % \end{bmatrix} \\
        = \begin{bmatrix}
            \Lambda_1 r_{k+1}^{(1)} \\ 0
        \end{bmatrix}.
    \end{align*}
    Considering $\beta_k$ we have 
    \begin{align*}
        % \beta_k &= \frac{r_{k+1}^\top z_{k+1}}{r_{k}^\top z_{k}} \\
		\tilde{\beta}_k &= \frac{\tilde{r}_{k+1}^\top \tilde{z}_{k+1}}{\tilde{z}_{k}^\top \tilde{z}_{k}} 
        = \frac{(r_{k+1}^{(1)})^\top z_{k+1}^{(1)}}{(r_k^{(1)})^\top z_k^{(1)}}
    \end{align*}
    and lastly the next search direction is given by
    \begin{align*}
        % p_{k+1} &= z_{k+1} + \beta_k p_{k} \\
        \tilde{p}_{k+1} &= U^\top p_{k+1} 
        % &= U^\top z_{k+1} + \beta_k U^\top p_{k} \\
        % &= \tilde{z}_{k+1} + \tilde{\beta}_k \tilde{p}_{k} \\
        % &= \begin{bmatrix}
        %     \tilde{z}_{k+1}^{(1)} + \tilde{\beta}_k \tilde{p}_{k}^{(1)} \\
        %     \tilde{z}_{k+1}^{(2)} + \tilde{\beta}_k \tilde{p}_{k}^{(2)}
        %     \end{bmatrix} \\
        = \begin{bmatrix}
            \tilde{z}_{k+1}^{(1)} + \tilde{\beta}_k \tilde{p}_{k}^{(1)} \\ 0
            \end{bmatrix}
    \end{align*}
    again maintaining the structure of the second component being zero. The norm of the error is given by
\begin{align*}
	% \| r_k \| &= (r_k^\top U U^\top r_k)^{(1/2)} \\
	% &= (\tilde{r}_k^\top \tilde{r}_k)^{(1/2)} \\
	% &= \| \tilde{r}_k \| \\
	% &= \| \tilde{r}_k^{(1)} \| + \| \tilde{r}_k^{(2)} \|
    \| \tilde{r}_k \| = (\tilde{r}_k^\top \tilde{r}_k)^{(1/2)} =  \| \tilde{r}_k^{(1)} \| + \| \tilde{r}_k^{(2)} \|
\end{align*}
We can see a clear structure in these quantities in terms of $\tilde{x}^{(1)}_k$ and $\tilde{x}^{(2)}_k$, and they can be separated. Thus we evaluate these two components separately. Only considering $\tilde{x}^{(1)}_k$ component, we have the following algorithm:
\begin{algorithmic}
	\State $x_0$: initial guess
	\State $\tilde{x}_0 = \begin{bmatrix}
		U_1^\top x_0 \\ U_2^\top x_0
	\end{bmatrix} = \begin{bmatrix}
		\tilde{x}^{(1)}_0 \\ \tilde{x}^{(2)}_0
	\end{bmatrix}$
	\State $\tilde{r}_0^{(1)} = \tilde{b}^{(1)} - U_1^\top A U_1 U_1^\top x_0 - U_1^\top A U_2 U_2^\top x_0 $ 
	\State $\tilde{r}_0^{(1)} =  \left( U_1^\top b - U_1^\top A U_2 U_2^\top x_0\right) - U_1^\top A U_1 U_1^\top x_0 $ 
	\State $\tilde{z}_0^{(1)} = \Lambda_1 \tilde{r}_0^{(1)}$
	\State $\tilde{p}_0^{(1)} = \tilde{z}_0^{(1)}$
	\State $k = 0$
	\While {$\| r_k \| > \epsilon $}
		\State $\tilde{\alpha}_k = \frac{\tilde{r}_k^{(1)\top} \tilde{z}_k^{(1)}}{\tilde{r}_{k-1}^{(1)\top} \tilde{z}_{k-1}^{(1)}}$
		\State $\tilde{x}_{k+1}^{(1)} = \tilde{x}_{k}^{(1)} + \tilde{\alpha}_k \tilde{p}_k^{(1)} $
		\State $\tilde{r}_{k+1}^{(1)} = \tilde{r}_k^{(1)} - \tilde{\alpha}_k^{(1)} \tilde{A} \tilde{p}_k^{(1)}$
		\State $\tilde{z}_{k+1}^{(1)} = \Lambda_1 \tilde{r}_{k+1}^{(1)}$
		\State $\tilde{\beta}_k^{(1)} = \frac{\tilde{r}_{k+1}^{(1)\top} \tilde{z}_{k+1}^{(1)}}{\tilde{r}_{k}^{(1)\top} \tilde{z}_{k}^{(1)}}$
		\State $\tilde{p}_{k+1}^{(1)} = \tilde{z}_{k+1}^{(1)} + \tilde{\beta}_k \tilde{p}_{k}^{(1)}$
	\EndWhile
	% \State $x = x_k$
\end{algorithmic}
Note that $U_1^\top A U$ is positive definite because A is positive definite and $U$ has full column rank (refer to \citet[Theorem 4.2.1]{golub_matrix_2013}). The above algorithm is equivalent to \cref{alg:pCG} for solving the system $U_1^\top A U_1 x^\star = \bar{b}$, (where $\bar{b} = U_1^\top b - U_1^\top A U_2 U_2^\top x_0$)  with positive-definite preconditioner $\Lambda_1$. Thus,
\begin{align*}
	\tilde{x}_k^{1} &\rightarrow (U_1^\top A U_1)^{-1} \bar{b}	\\
	% &= (U_1^\top A^{-1} U_1  - U_1^\top A^{-1} U_2 (U_2^\top A^{-1} U_2)^{-1} U_2^\top A^{-1} U_1) \bar{b} \\
    % &= (U_1^\top A^{-1} U_1  - U_1^\top A^{-1} U_2 (U_2^\top A^{-1} U_2)^{-1} U_2^\top A^{-1} U_1)\\ & \qquad (U_1^\top b -  U_1^\top A U_2 U_2^\top x_0 )\\
    % &= (U_1^\top A^{-1} U_1 U_1^\top  - U_1^\top A^{-1} U_2 (U_2^\top A^{-1} U_2)^{-1} U_2^\top A^{-1} U_1 U_1^\top) \\ & \qquad (b -  A U_2 U_2^\top x_0 )\\
    &= U_1^\top A^{-1} U_1 U_1^\top c  - U_1^\top A^{-1} U_2 (U_2^\top A^{-1} U_2)^{-1} U_2^\top A^{-1} U_1 U_1^\top c
    % &= U_1^\top A^{-1} U_1 U_1^\top b  - U_1^\top A^{-1} U_2 (U_2^\top A^{-1} U_2)^{-1} U_2^\top A^{-1} U_1 U_1^\top b \\ \quad &+ U_1^\top A^{-1} U_1 U_1^\top A U_2 U_2^\top x_0 + U_1^\top A^{-1} U_2 (U_2^\top A^{-1} U_2)^{-1} U_2^\top A^{-1} U_1 U_1^\top A U_2 U_2^\top x_0
\end{align*}
Where $c = b -  A U_2 U_2^\top x_0 $. The $\tilde{x}^{(2)}_k$ component equal to $\tilde{x}_0^{(2)} = U_2^\top x_0 $ throughout the algorithm. We can see that 
\begin{align*}	
	x_k &= U \tilde{x}_k
	= \begin{bmatrix}
		U_1 & U_2
	\end{bmatrix}
	\begin{bmatrix}
		\tilde{x}_k^{(1)} \\ \tilde{x}_k^{(2)}
	\end{bmatrix} 
	= U_1 \tilde{x}_k^{(1)} + U_2 \tilde{x}_k^{(2)} 
\end{align*}    
With this, we get the following convergence of $x_k$ - 
\begin{align*}        
	x_k &\rightarrow  U_1 U_1^\top A^{-1} U_1 U_1^\top c - U_1 U_1^\top A^{-1} U_2 (U_2^\top A^{-1} U_2)^{-1} U_2^\top A^{-1} U_1 U_1^\top c + U_2 U_2^\top x_0 \\
    % &= U_1 U_1^\top \left(A^{-1} U_1 U_1^\top c - A^{-1} U_2 (U_2^\top A^{-1} U_2)^{-1} U_2^\top A^{-1} U_1 U_1^\top c \right) + U_2^\top x_0 \\
    % &= \left( I - U_2 U_2^\top \right) \left(A^{-1} U_1 U_1^\top c - A^{-1} U_2 (U_2^\top A^{-1} U_2)^{-1} U_2^\top A^{-1} U_1 U_1^\top c \right) + U_2 U_2^\top x_0 \\
    % &= A^{-1} U_1 U_1^\top c - A^{-1} U_2 (U_2^\top A^{-1} U_2)^{-1} U_2^\top A^{-1} U_1 U_1^\top c - U_2 U_2^\top A^{-1} U_1 U_1^\top c \\ & \quad + U_2 U_2^\top A^{-1} U_2 (U_2^\top A^{-1} U_2)^{-1} U_2^\top A^{-1} U_1 U_1^\top c + U_2 U_2^\top x_0 \\
    % &= A^{-1} U_1 U_1^\top c - A^{-1} U_2 (U_2^\top A^{-1} U_2)^{-1} U_2^\top A^{-1} U_1 U_1^\top c - U_2 U_2^\top A^{-1} U_1 U_1^\top c \\ & \quad + U_2 U_2^\top A^{-1} U_1 U_1^\top c + U_2 U_2^\top x_0 \\
    % &= A^{-1} U_1 U_1^\top c - A^{-1} U_2 (U_2^\top A^{-1} U_2)^{-1} U_2^\top A^{-1} U_1 U_1^\top c + U_2 U_2^\top x_0 \\
    % &= A^{-1} U_1 U_1^\top c - A^{-1} U_2 (U_2^\top A^{-1} U_2)^{-1} U_2^\top A^{-1} \left(I - U_2 U_2^\top\right) c + U_2 U_2^\top x_0 \\
    % &= A^{-1} U_1 U_1^\top c - A^{-1} U_2 (U_2^\top A^{-1} U_2)^{-1} U_2^\top A^{-1}c + A^{-1} U_2 (U_2^\top A^{-1} U_2)^{-1} U_2^\top A^{-1} U_2 U_2^\top c + U_2 U_2^\top x_0 \\
    % &= A^{-1} U_1 U_1^\top c - A^{-1} U_2 (U_2^\top A^{-1} U_2)^{-1} U_2^\top A^{-1}c +  A^{-1} U_2 U_2^\top b + U_2 U_2^\top x_0  \\
    &= A^{-1}c- A^{-1} U_2 (U_2^\top A^{-1} U_2)^{-1} U_2^\top A^{-1}c + U_2 U_2^\top x_0 
\end{align*}
Now substituting the value of $c$,
\begin{align*}
    % &= A^{-1}(b -  A U_2 U_2^\top x_0)- A^{-1} U_2 (U_2^\top A^{-1} U_2)^{-1} U_2^\top A^{-1}(b -  A U_2 U_2^\top x_0) + U_2 U_2^\top x_0\\
    % &= A^{-1}b -  A^{-1} A U_2 U_2^\top x_0 - A^{-1} U_2 (U_2^\top A^{-1} U_2)^{-1} U_2^\top A^{-1} b \\ & \quad + A^{-1} U_2 (U_2^\top A^{-1} U_2)^{-1} U_2^\top A^{-1} A U_2 U_2^\top x_0 + U_2 U_2^\top x_0  \\
    &= A^{-1}b  - A^{-1} U_2 (U_2^\top A^{-1} U_2)^{-1} \left( U_2^\top A^{-1} b  - U_2^\top x_0 \right) 
\end{align*}
Given the condition that the initial guess is correct in the null space of the preconditioner, i.e., $U_2^\top x_0 = U_2^\top A^{-1}b$, our preconditioned Conjugate Gradient algorithm converges to the solution of the linear system $Ax = b$.  

\end{proof}

    \section{Implementation Details} \label{sec:implementation}
    
\subsection{Practical Implementation} \label{sec:practical}

An algorithmic description of the proposed \ac{CompCG} method is given in \cref{alg:CompCG}. There are several implementation details which are discussed below.

\paragraph{Cholesky Updates}
To calculate the posterior mean and covariance we must invert $G_n$, which would have cost $\mathcal{O}(M_n^3)$ if implemented naively. 
However, since we are in a sequential data setting we can exploit Cholesky update formulae to reduce this to a cost of $\mathcal{O}(m_n^3)$. 
Let $L_{11}$ be the lower Cholesky factor of a symmetric positive definite matrix $A_{11}$. Then the Cholesky decomposition of the block symmetric positive definite matrix 
$$
A = \begin{pmatrix}
A_{11} & A_{12}\\
A_{12}^T & A_{22} 
\end{pmatrix}
$$
can be computed as 
$$
L = \begin{pmatrix}
L_{11} & 0\\
L_{12} & L_{22} 
\end{pmatrix}
$$ 
where $L_{12} =  (L_{11}^{-1}A_{12})^T $ and $L_{22}$ is the Cholesky factor of $A_{12} - L_{12}L_{12}^T$. 
Thus, to add a new training data point, adding a new $m_n \times m_n$ block to $G_{n-1}$ of size $M_{n-1}$ would require a Cholesky factor for a matrix of size $m_n$ (as compared to finding a Cholesky factor of size $M_n = \sum_{1}^{n}m_n$), a matrix-matrix multiplication of matrices of size $M_{n-1} \times m_n$,  and a triangular solve of size $M_{n-1}$ for the computation of the new $G_n^{-1}$.

\paragraph{Hyperparameter Optimisation}\label{sec:hypprm}
As with any \ac{GP} regression problem, performance of the \ac{CompCG} method will be strongly affected by the choice of hyperparameters.

In this work we do not explore this, and set the hyperparameters manually based on knowledge of the problem at hand. We explore the effect of this choice on our simulation study setting in \cref{sec:app_simulation}.
A more principled approach would be to optimise the marginal likelihood on a small sample from the parameter space (e.g.\ sampled at random according to some distribution believed to cover significant values of $\theta$, or using a small latin hypercube or sparse grid design).
However we leave development of these ideas for future work.

\subsubsection{Training Set Size} \label{sec:train_size}
While the dominant cost of \ac{CompCG} is $\mathcal{O}(m_n^3)$, if $M_n$ becomes large this will nevertheless begin to dominate the cost of the computation.
We therefore discuss several strategies for limiting the size of the training set, below.

\paragraph{Choosing the size of Search Direction matrix} \label{sec:SDsize}

A natural way to reduce $M_n$ is to limit the size of the search direction matrix, $m_n$.
This represents a trade-off between more accurate posterior mean and covariance, and computational efficiency. 

We elect to use an ad-hoc choice of $m_i = \alpha d$ for some $0 < \alpha < 1$.
For our experiments we set $\alpha = 0.2$, which seems to result in good performance.

\paragraph{Update Limitation} \label{sec:adddata}
We could also reduce the frequency of training set updates, rather than updating at every $i=1,\dots,n$.
We have several suggestions here:
\begin{enumerate}
	\item Update the training set when the number of CG iterations required to solve the linear system is high. A high number of CG iterations indicates that the posterior mean and covariance are not efficient as initial guess and preconditioner respectively, suggesting the model should be improved by incorporating more recent data.
	\item Update the training set every $j\textsuperscript{th}$ parameter visited, $j > 1$. This would work best if the parameters explored are near to each other, ensuring the training set does not have irrelevant points.
	\item Update the training when the new parameter $\theta_k$ is sufficiently ``far'' from the parameters in the current training set $T$. This ensures that the training data is updated when the new linear system is different from the already explored parameter space. The decision of how far is sufficiently far in the parameter space is problem specific, however, and the cost of computing this distance could be non-negligible.
	\item Optimally introduce search directions from $S_n$ rather than selecting in an ``all-or-nothing'' fashion.
	For example, one could select $\tilde{m}_i \ll m_i$ directions that maximally differ from one another (e.g.\ in terms of a Grassmann distance \citep{zhang_grassmannian_2018}), or using an SVD to optimally compress the basis (as seen in \cite{pfortner_computation-aware_2024}).
\end{enumerate}

Ultimately, the best approach depends on the problem at hand. For our case of hyperparameter optimisation in \cref{sec:optim}, as the parameter space is presumed to change as the optimiser proceeds, leading to a varying local parameter space, we use the number of CG iterations to decide on updating the training set. This decision to update the model is taken after solving the linear system, by adding an additional checkpoint after line 12 in \cref{alg:CompCG}.

\paragraph{Truncation}
In addition to reducing the number of training set updates, we could also periodically truncate the training set to retain only the most relevant information.
Resetting the model to remove all the data is a natural choice in many applications, e.g.~when an optimiser moves to explore a new portion of the parameter space, previous information may no longer be relevant. We exploit this in \cref{sec:optim}, where we reset the model when there is a big change in the computed NLL, indicating a change in the parameter space. 
One could also optimally compress the training set as a whole using similar strategies as discussed in point 4 under ``Update limitation'', above. 
Block Cholesky downdates can be also be used to efficiently remove information from $G_n$. While adding extra blocks to the Cholesky decomposition is rather straightforward, removing blocks is more extensive, and we leave this implementation for future work.

\subsection{Complexity Analysis}\label{sec:complexity}

We analyse the cost required to solve a new linear system of the size $d$ using $m_n$ search directions, when the trained regression model has size $M_{n-1} = \sum_{i=1}^{n-1} m_i$. The computations required for the companion regression model can be spilt into two: (i) updating the model, and (ii) calculating the posterior mean and covariance. 
We ignore the (possibly non-negligible) cost of computing the search directions as this varies based on the method used, and focus only on the cost of updating and evaluating the regression model, as this is overhead on top of the cost of CG.

\paragraph{(i) Updating the model} 
This requires matrix multiplications costing $\mathcal{O}(m_nd^2)$ and computing the Cholesky factor of $G_n$.
Since the factor of $G_{n-1}$ is available this can be updated as described above, costing $\mathcal{O}(M_{n-1}m_n^2 + M_{n-1}^2m_n + m_n^3)$.

\paragraph{(ii) Mean and covariance calculations} 
These require two triangular solves costing $\mathcal{O}(M_{n}^2d)$ (lines 7 and 8 in \cref{alg:CompCG}), and matrix multiplications to calculate $K_n$ and posteriors (in lines 5, 9 and 10 of \cref{alg:CompCG}) costing $\mathcal{O}(M_{n}d^2 + M_{n}d )$ resulting in the total cost of $\mathcal{O}(M_{n}^2d + M_{n}d + M_{n}d^2)$.

\paragraph{Summary}
In total, the companion regression model would cost $\mathcal{O}( m_n^3 + M_{n-1}^2 m_n  + M_{n}^2 d + m_n^2 M_{n-1} + M_{n}d^2 + m_nd^2 + M_{n}d )$. Assuming we keep the size of the training set in check using truncation (i.e.~ keeping $M_n = \mathcal{O}(m_n)$), ignoring the lower order terms, this simplifies to $\mathcal{O}(m_n^3 + m_{n}^2 d + m_{n}d^2)$.
In case of an ill-conditioned dense matrix, we argue that as long as the information $y_i$'s are of small size, i.e., $m_n \ll d$ and thus $M_{n-1} \ll d$, this cost is justified as the reduction in the number of CG iterations makes up for the additional cost involved in maintaining the companion regression model.

\begin{algorithm}
    \begin{algorithmic}[1]
		\Procedure{\textsc{pCG}}{$A$, $b$, initial guess $x_0$, preconditioner $P$, error tolerance $\epsilon$}
        \State $r_0 = b - Ax_0$
        \State $z_0 = Pr_0$
        \State $p_0 = z_0$
        \State $k = 0$
        \While {$\| r_k \| > \epsilon $}
            \State $\alpha_k = \frac{r_k^\top z_k}{r_{k-1}^\top z_{k-1}}$
            \State $x_{k+1} = x_{k} + \alpha_k p_k $
            \State $r_{k+1} = r_k - \alpha_k A p_k$
            \State $z_{k+1} = P r_{k+1}$
            \State $\beta_k = \frac{r_{k+1}^\top z_{k+1}}{r_{k}^\top z_{k}}$
            \State $p_{k+1} = z_{k+1} + \beta_k p_{k}$
			% \State $k = k + 1$
        \EndWhile
		\State \Return $x_k$
	\EndProcedure
    \end{algorithmic}
    \caption{The preconditioned conjugate gradient solver} \label{alg:pCG}
\end{algorithm}

\begin{algorithm*} 
	\caption{Conjugate gradient solver with companion regression model} \label{alg:CompCG}
	\begin{algorithmic}[1]
		\Procedure{\textsc{CompCG}}{$\{A_n, b_n, \theta_n\}_{n=1}^{l}$, error tolerance: $\epsilon$, prior covariance: $C_0$, type of search direction: $SD$, size of search direction matrix: $m$}
			\For{$n = 1, 2, \cdots l$ }
			\State $S$ = \textsc{SearchDirection}($SD$, $m$, $A_n$, $b_n$, $\theta_n$)
			\State Update $T$, $\mathcal{A}^\top \mathcal{S}$, $z$ with $\theta_n, A_n$ and $S$
			\State $K = C_0(\theta_n, T) \cdot \mathcal{A}^\top \mathcal{S}$  
			\State $G_{chol}$ = \textsc{CholeskyUpdate}($G_{chol}$, $A_n$,$S$, $\theta_n$) \Comment{Block Cholesky update}
			\State $\beta_1$ = $G_{chol}^{-1} K^\top $  
			\State $\beta$ = $G_{chol}^{-1}\beta_1$
			\State $\bar{x} = \beta^\top \cdot z$   \Comment{Posterior predictive mean}
			\State $C = C_0(\theta_n, \theta_n) - \beta^\top \cdot K^\top$ \Comment{Posterior predictive covariance} 
			\State $x_n$ = \textsc{pCG}($A_n$, $b_n$, $x_0 = \bar{x}$, $P = C$, $\epsilon = \epsilon$)
			\State \Return $x_n$
			\EndFor
		\EndProcedure
	\end{algorithmic}
\end{algorithm*}

    \section{Additional Results} \label{sec:table}
    \subsection{Simulation Study} \label{sec:app_simulation}

\cref{fig:hp} shows CG iterations for varying lengthscale as a test to how an incorrectly chosen lengthscale would affect the performance of CompCG. 
Note that the theoretical results leading to \cref{thm:pCG_spsd_pc} do not depend on how this length-scale is chosen.
However, the additional preconditioning effect observed between \cref{fig:cg_iters,fig:cg_iters_nl} is likely to be significantly affected by a poor hyperparameter choice.
We observe that CompCG Subset and CompCG BayesCG are quite robust. However, CompCG BayesCG\_Id runs into precision issues when the lengthscale is high (10), leading to poor performance. We note here that CompCG BayesCG\_Id is the worst performing variant among the three CompCG variants, and this experiment reinforces our preference for CompCG Subset and CompCG BayesCG over CompCG BayesCG\_Id.

\begin{figure}
		\centering
		\includegraphics[width= \columnwidth]{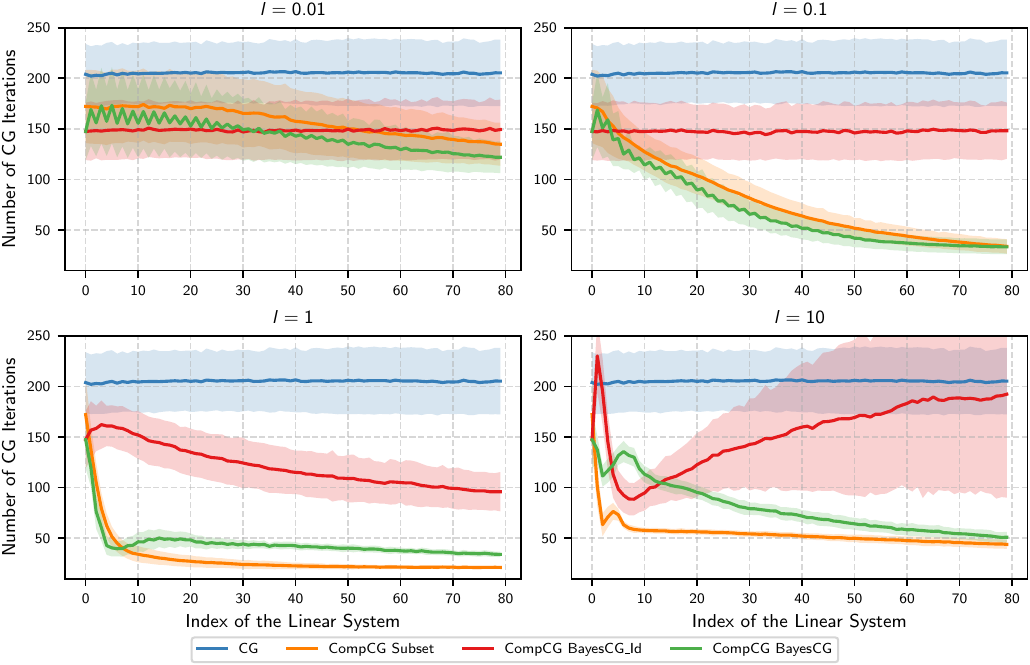}
		\caption{Number of CG iterations with differing lengthscale for the regression model} \label{fig:hp}
\end{figure}

The \ac{GP} mean from our regression model could be a good initial guess, in principle, independent of the proconditioner. But \cref{fig:cg_iters_ig} (of CG iterations using only initial guess computed from different search directions for the regression model) shows that just using the mean in this way does not significantly reduce the total number of iterations compared to CG. We hypothesise that this is because CG still ends up exploring the space in which the mean has already been identified because of the lack of deflation from the covariance preconditioner. Although this does not directly give us a motivation for extending the prior across the parameter space, there is still some evidence in our paper that the dependence structure imparts additional preconditioning benefit compared to just running some pre-iterations of BayesCG and using these to construct an initial guess and preconditioner, as seen in \cref{fig:cg_iters} and \cref{fig:cg_iters_nl}. This work does not include a theoretical justification for this, but we plan to explore this phenomenon more in future work.

\begin{figure}
\centering
\begin{minipage}{.48\textwidth}
  \centering
  \includegraphics[width= \columnwidth]{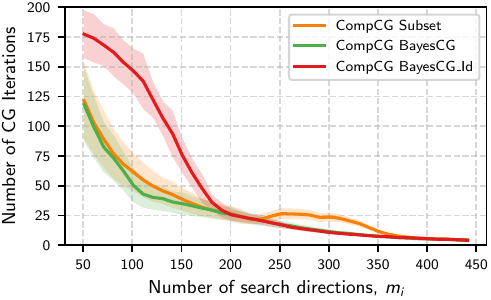}
		\caption{Number of CG iterations with differing number of search directions} \label{fig:cg_iters_ns}
\end{minipage}%
\hspace{0.02\textwidth}
\begin{minipage}{.48\textwidth}
  \centering
  \includegraphics[width= \columnwidth]{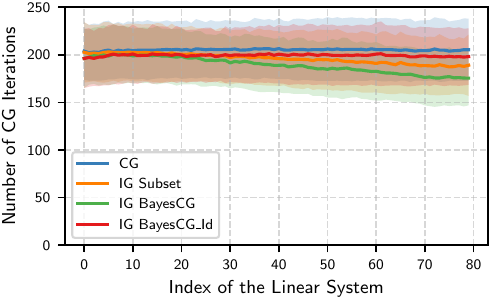}
		\caption{Number of CG iterations using only initial guess \textcolor{white}{filer-------}} \label{fig:cg_iters_ig}
\end{minipage}
\end{figure}

The amount of information used in training the regression model, determined by the number of search directions $m$ (as seen in \cref{eq:yi}), strongly influences the performance of the linear solver. \cref{fig:cg_iters_ns} shows the number of CG iterations required to solve the $40$\textsuperscript{th} linear system averaged over 20 runs, as a function of $m_i$. As expected, \cref{fig:cg_iters_ns} shows that the number of CG iterations decreasing with increasing $m_i$, as this results in better model training.

\subsection{GP Hyperparameter Optimisation Problem}
We present additional results of the GP hyperparameter optimisation problem as described in \cref{sec:optim} here.

\begin{table}
	
	\begin{tabular*}{\columnwidth}{l|lllllllllll}
		$d$ & 162  & 648 & 800 & 1035 & 1352 & 1800 & 2592 & 4050 & 7200 & 10368 & 16200 \\
		% \textcolor{gray}{\cline{1-8}}

        \grayline
		\multicolumn{11}{l}{\rule{0pt}{1em}Time for solving the linear systems (sec)} \\
		\grayline
		\rule{0pt}{1em}CG & 0.25  & 0.85 & 0.99 & 1.55 & 1.32 & 3.13 & 7.09 & 73.31 & 193.94 & 347.00 & 1671.25 \\
		\begin{tabular}{@{}l@{}}CompCG \\ Subset\end{tabular}  & \textbf{0.21}  & \textbf{0.43} & 0.51 & \textbf{0.79} & 1.89 & 1.47 & 3.09 & 14.19 & \textbf{48.09}  & 131.81 & \textbf{177.77}  \\
		\begin{tabular}{@{}l@{}}CompCG \\ BayesCG \end{tabular}  & 0.24 & 0.53 & \textbf{0.48} & 0.79 & \textbf{1.20} & \textbf{1.42} & \textbf{2.51} & \textbf{13.22} & 59.34  & \textbf{ 95.54}  & 272.27  \\

		\grayline
		\multicolumn{7}{l}{\rule{0pt}{1em}Average time for solving linear systems per optimiser iteration (sec)}  \\
		\grayline
		\rule{0pt}{1em}CG & 0.0031 & 0.0112 & 0.0118 & 0.0185 & 0.0220 & 0.0373 & 0.0844 & 0.8728 & 2.8521 & 6.6730 & 19.8959 \\
		\begin{tabular}{@{}l@{}}CompCG \\ Subset\end{tabular} &  
        0.0035 & 0.0083 & 0.0106 & \textbf{0.0116} & 0.0224 & 0.0263 & 0.0514 & \textbf{0.2729} & \textbf{0.7514} & 1.6476 & 4.0402  \\
		\begin{tabular}{@{}l@{}}CompCG \\ BayesCG\end{tabular}   & 
        \textbf{0.0029} & \textbf{0.0069} & \textbf{0.0086} & 0.0142 & \textbf{0.0215} & \textbf{0.0253} & \textbf{0.0419} & 0.2754 & 0.8726 & \textbf{1.4929} & \textbf{3.7815}  \\

		\grayline
		\multicolumn{11}{l}{\rule{0pt}{1em}Wall-time (sec)} \\
		\grayline
		\rule{0pt}{1em}CG & 0.82 & 4.78 & 4.99 & 7.50 & \textbf{8.75}  & 24.38 & 51.84 & 407.08 & 1475.93 & 4554.74 & 11241.33 \\   
		\begin{tabular}{@{}l@{}}CompCG \\ Subset \end{tabular} & \textbf{0.62 }& \textbf{2.80 }& \textbf{3.41} & 6.46 & 10.86 & \textbf{13.79} & 59.34 & 404.10 & \textbf{1202.75} & \textbf{2963.18} & 8440.28  \\
		\begin{tabular}{@{}l@{}}CompCG \\ BayesCG \end{tabular} &  0.81 & 2.90 & 3.84 & \textbf{6.09}& 14.50 & 23.33 & \textbf{49.27} & \textbf{297.70} & 1955.75 & 3135.40 & \textbf{7469.10}  \\

		\grayline
		\multicolumn{11}{l}{\rule{0pt}{1em}NLL} \\
		\grayline
		\rule{0pt}{1em}CG & 3264.2 & 7375.3 & 9164.8 & 11352.1 & 13477.0 & 16124.5 & 21192.6 & 29123.3 & 44313.8 & 59584.7 & 82928.9  \\
		\begin{tabular}{@{}l@{}}CompCG \\ Subset\end{tabular} &
        3261.2 & 7348.2 & 9111.6 & 11343.4 & 13464.0 & 16126.6 & 21184.6 & 29119.2 & 44289.1 & 59555.7 & 82909.6 \\
		\begin{tabular}{@{}l@{}}CompCG \\ BayesCG \end{tabular} &
        3267.1 & 7349.1 & 9122.4 & 11341.5 & 13480.7 & 16118.5 & 21165.3 & 29112.8 & 44296.4 & 59595.0 & 82958.4 \\
		\grayline
		\multicolumn{11}{l}{\rule{0pt}{1em}Total no. of CG iterations} \\
		\grayline
		\rule{0pt}{1em}CG & 5156 & 9663 & 12086 & 13572 & 10298 & 16794 & 21776 & 24143 & 24686 & 22237 & 47268 \\
		\begin{tabular}{@{}l@{}}CompCG \\ Subset\end{tabular}  & 2028 & 2193 & 1927  & 2307  & 3033  & 2472  & 2464  & 2811  & 3197  & 3785  & 3391  \\
		\begin{tabular}{@{}l@{}}CompCG \\ BayesCG \end{tabular}  & 
        2381 & 2796 & 2157  & 2362  & 2384  & 2481  & 2537  & 2293  & 3379  & 3236  & 4154  \\
		\grayline
		\multicolumn{11}{l}{\rule{0pt}{1em} No. of optimiser iterations} \\
		\grayline
		\rule{0pt}{1em}CG & 80 & 76 & 84 & 84 & 60 & 84 & 84 & 84 & 68 & 52 & 84 \\
		\begin{tabular}{@{}l@{}}CompCG \\ Subset\end{tabular}  & 60 & 52 & 48 & 68 & 84 & 56 & 60 & 52 & 64 & 80 & 44 \\
		\begin{tabular}{@{}l@{}}CompCG \\ BayesCG \end{tabular}    & 84 & 76 & 56 & 56 & 56 & 56 & 60 & 48 & 68 & 64 & 72 \\
		% \grayline
		% \multicolumn{7}{l}{\rule{0pt}{1em}Average Time per optimiser iteration (sec)}  \\
		% \grayline
		% \rule{0pt}{1em}CG & 0.0103 & 0.0629 & 0.0594 & 0.0893 & 0.1458 & 0.2902 & 0.6172 & 4.8462 & 21.7049 & 87.5912 & 133.8253 \\
		% \begin{tabular}{@{}l@{}}CompCG with \\ Subset SD\end{tabular} &  
        % 0.0103 & 0.0539 & 0.0711 & 0.0950 & 0.1293 & 0.2463 & 0.9890 & 7.7712 & 18.7930 & 37.0398 & 191.8245 \\
		% \begin{tabular}{@{}l@{}}CompCG with \\ BayesCG SD\end{tabular}   & 
        % 0.0097 & 0.0382 & 0.0685 & 0.1088 & 0.2589 & 0.4166 & 0.8211 & 6.2020 & 28.7611 & 48.9906 & 103.7375 \\
        
	\end{tabular*}
	\caption{Additional results of NLL optimisation for the regression problem in Section 6.2 for varying $d$}\label{tbl:app_nll_times}
\end{table}

    % \section{Connecting Discrete and Continuous Models}
    % \input{A2_continuous.tex}
\end{appendices}
\end{document}